
\documentclass[11pt, a4paper]{article}
\usepackage[a4paper]{geometry}

\usepackage[english]{babel}

\hyphenation{parameter}
\usepackage[utf8]{inputenc}
\usepackage{xspace}
\usepackage{amsmath,amsthm,amssymb,mathtools}
\usepackage{lmodern}
\usepackage{bbm}
\usepackage{url}
\usepackage[caption=false]{subfig}
\usepackage{natbib}
\usepackage{algorithm}
\usepackage[noend]{algpseudocode}
\usepackage{xcolor}
\usepackage{tikz}
\usepackage{graphicx}
\renewcommand{\labelenumi}{(\alph{enumi})}
\renewcommand\theenumi\labelenumi
\usepackage{diagbox}
\usepackage{tabularx}
\newcolumntype{Y}{>{\centering\arraybackslash}X}

\clubpenalty=10000
\widowpenalty=10000

\newcommand{\sdooea}{SD-(1+1)~EA\xspace}
\newcommand{\oneoneea}{\ooea}
\newcommand{\rls}{RLS\xspace}
\newcommand{\sdrls}{SD-RLS\xspace}
\newcommand{\sdrlss}{SD-RLS$^*$\xspace}

\newcommand{\oofea}{(1+1)~FEA$_\beta$\xspace}
\newcommand{\ooea}{(1+1)~EA\xspace}

\newcommand{\om}{\textsc{OneMax}\xspace}

\newcommand{\onemax}{\om}
\newcommand{\jump}{\textsc{Jump}\xspace}

\newcommand{\needhighmut}{\textsc{NeedHighMut}\xspace}
\newcommand{\needglobalmut}{\textsc{NeedGlobalMut}\xspace}
\newcommand{\pre}{\textsc{pre}}
\newcommand{\suff}{\textsc{suff}}

\newcommand{\R}{\ensuremath{\mathbb{R}}}
\newcommand{\N}{\ensuremath{\mathbb{N}}}

\DeclareMathOperator{\Prob}{Pr}

\newcommand{\ones}[1]{\lvert #1\rvert_1}

\newcommand{\ie}{i.\,e.\xspace}
\newcommand{\eg}{e.\,g.\xspace}
\newcommand{\wlo}{w.\,l.\,o.\,g.\xspace}

\newcommand{\card}[1]{\lvert #1\rvert}

\DeclareMathOperator{\gap}{gap}
\DeclareMathOperator{\opt}{opt}
\DeclareMathOperator{\im}{Im}

\newcommand{\xopt}{x_{\opt}}
\newcommand{\prob}[1]{\mathord{\Prob}\mathord{\left(#1\right)}}

\newcommand{\expect}[1]{\mathrm{E}\left(#1\right)}

\newcommand{\TG}{TG\xspace}
\newcommand{\erdosrenyi}{Erdős–Rényi\xspace}
\clubpenalty=10000
\widowpenalty=10000

\renewenvironment{proof}
{\begin{trivlist}\item\textbf{Proof.}}
{\hspace*{\fill}$\Box$\end{trivlist}}

\newenvironment{proofof}[1]
{\begin{trivlist}\item\textbf{Proof of #1.}}
{\hspace*{\fill}$\Box$\end{trivlist}}

\algnewcommand{\IfThenElse}[3]{
  \State \algorithmicif\ #1\ \algorithmicthen\ #2\ \algorithmicelse\ #3}

\DeclarePairedDelimiter\floor{\lfloor}{\rfloor}

\newtheorem{lemma}{Lemma}
\newtheorem{theorem}{Theorem}

\title{Stagnation Detection with \\Randomized Local Search}
\author{
  Amirhossein Rajabi\\
  Technical University of Denmark \\
	Kgs. Lyngby \\
	Denmark \\
amraj@dtu.dk \\
  \and
    Carsten Witt\\
  Technical University of Denmark \\
	Kgs. Lyngby \\
	Denmark \\
cawi@dtu.dk \\
}

\allowdisplaybreaks[4]
\begin{document}

\maketitle 
\begin{abstract}
Recently a mechanism called stagnation detection was proposed that
automatically adjusts the mutation rate of evolutionary algorithms when 
they encounter local optima. The so-called \sdooea introduced by
Rajabi and Witt (GECCO~2020) adds stagnation detection to the classical 
\ooea with standard bit mutation, which flips each bit independently 
with some mutation rate, and raises the mutation rate when the 
algorithm is likely to have encountered local optima.

In this paper, 
we investigate stagnation detection in the context of the $k$-bit flip
operator of randomized local search that flips $k$ bits chosen uniformly
at random and let stagnation detection adjust the parameter~$k$. We obtain 
improved runtime results compared to the \sdooea amounting to a speed-up of up 
to~$e=2.71\dots$ Moreover, we propose additional schemes that prevent infinite 
optimization times even if the algorithm misses a working choice of~$k$ due 
to unlucky events. Finally, we present an example where standard bit mutation 
still outperforms the local $k$-bit flip with stagnation detection.
\end{abstract} 

\section{Introduction}
Evolutionary Algorithms (EAs) are parameterized algorithms, so it has been ongoing research to discover how to choose their parameters best. Static parameter settings are not efficient for a wide range of problems. Also, given a specific problem, there might be different scenarios during the optimization, which results in inefficiency of one static parameter configuration for the whole run. Self-adjusting mechanisms address this issue as a non-static parameter control framework that can learn acceptable or even near-optimal parameter settings on the fly. See also the survey article \cite{DoerrDoerrParameterBookChapter} for a detailed coverage of static and non-static parameter control.

Many studies have been conducted on frameworks which adjust the mutation rate of different mutation operators, in particular in the standard bit mutation for the search space of bit strings $\{0,1\}^n$ to make the rate efficient on unimodal functions. For example, the $(1+(\lambda,\lambda)$)~GA using the $1/5$-rule can adjust its mutation strength (and also its crossover rate) on \om \citep{DoerrDoerrAlgorithmica18}, resulting in asymptotic 
speed-ups compared to static settings. Likewise, the self-adjusting mechanism in the (1+$\lambda$)~EA with two rates proposed in \cite{DoerrGWYAlgorithmica19} performs on unimodal functions as 
efficiently as the best $\lambda$-parallel unary unbiased black-box algorithm.

The self-adjusting frameworks mentioned above are mainly designed to optimize unimodal functions. Generally, they are not able to suggest an efficient parameter setting where algorithms get stuck in a local optimum since they mainly work based on the number of successes, so there is no signal in such a situation. On multimodal functions, where some specific numbers of bits have to flip to make progress, \textit{Stagnation Detection} (SD) introduced in \cite{RajabiWittGECCO20} can overcome local optima in  efficient time. This module can be added to most of the existing algorithms to leave local optima without any significant increase of the optimization time of unimodal (sub)problems. To our knowledge, no study has put forward other runtime analyses of self-adjusting mechanisms on multimodal functions. However, in a broader context of mutation-based randomized search heuristics, the heavy-tailed mutation presented in \cite{DoerrLMNGECCO17} has been able to leave a local optimum in a much more efficient time than the standard bit mutation does. Moreover, in the context of 
artificial immune systems \citep{CorusOYPPSN18} and
hyperheuristics \citep{LissovoiOWAAAI19},
there are
proofs that specific search operators and selection of low-level 
heuristics 
can speed up multimodal optimization compared to the classical 
mutation operators.

Recent theoretical research on evolutionary algorithms in discrete search spaces mainly considers global mutations which can create all possible points in one iteration. These mutations have been functional in optimization scenarios where information about the difficulties of the local optima is not available. For example, the standard bit mutation which flips each bit independently with a non-zero probability can produce any point in the search space. However, local mutations can only create a fixed set of offspring points. The 1-bit flip mutation that often can be found in the Randomized Local Search algorithm (RLS) can only reach a limited number of search points, which results in being stuck in a local optimum with the elitist selection.
Nevertheless, local mutations may outperform global mutations on unimodal functions and multimodal functions with known gap sizes. It is of special interest to use advantages of local mutations on unimodal (sub)functions additionally to overcome local optima efficiently. 

This paper investigates $k$-bit flip mutation as a local mutation in the context of the above-mentioned stagnation detection mechanism. This mechanism detects when the algorithm is stuck in a local optimum and gradually increases mutation strength (\ie, the 
number of flipped bits)  to a value the algorithm needs to leave the local optimum. 
Similarly, we aim to show that the algorithms using $k$-bit flip can use stagnation detection to tune the parameter~$k$. One of the key benefits of such algorithms is using the efficiency of RLS, which performs very well on unimodal (sub)problems without fear of infinite running time in local optima. An additional advantage of using $k$-bit flip mutation accompanied by stagnation detection is that it overcomes local optima more efficiently than global mutations. 
Moreover, the outcome points out the advantages and practicability of our self-adjusting approach that makes local-mutation algorithms able to optimize functions that have been intractable to solve so far.

We propose two algorithms combining stagnation detection with local mutations. The first algorithm called \sdrls 
gradually increases the mutation strength  when the current strength 
has been unsuccessful in finding improvements for a 
significantly long time. In the most extreme case, the 
strength ends at~$n$, \ie, mutations flipping all bits. 
With high probability, \sdrls has a runtime that is 
by a factor of $(\frac{ne}{m})^m/\binom{n}{m}$ (up to lower-order terms) smaller on functions with 
Hamming gaps of size~$m$  than the \sdooea 
previously considered in \cite{RajabiWittGECCO20}. This improvement is especially strong for small~$m$ and amounts to a factor of $e$ on unimodal functions. 
Although it is unlikely that the algorithm fails to find an 
improvement 
when the current strength allows this, there is a risk that this algorithm misses the ``right'' strength and 
therefore it can have infinite expected runtime. To address 
this, we propose a second algorithm called \sdrlss that 
repeatedly loops over all smaller strengths than the 
last attempted one when it fails to find an improvement. 
This results in expected finite optimization time on all 
problems and only increases the typical 
runtime by lower-order 
terms compared to \sdrls. We also observe that the algorithms 
we obtain can still follow
the same search trajectory as the classical RLS when one-bit flips are sufficient 
to make improvements. In those cases, well-established techniques for the 
analysis of RLS like the fitness-level method carry over to our variant 
enhanced with stagnation detection. This is not necessarily the case in 
related approaches like variable neighborhood
search \citep{HansenMladenovic18} 
and quasirandom evolutionary algorithms \citep{DoerrFW10} both of which employ 
more determinism and do not generally follow the trajectory of RLS.

We shall investigate the two suggested 
algorithms on unimodal functions and 
functions with local optima of different so-called 
gap sizes, corresponding 
to the number of bits that need to be flipped to escape from the 
optima. Many results are obtained following the analysis of the
SD-\oneoneea \citep{RajabiWittGECCO20} which uses a global operator
with self-adjusted mutation strength. In fact, often the general 
proof structure could be taken over almost literally but with 
improved overall bounds. In conclusion,
the self-adjusting local mutation seems to be the
preferred alternative 
to the SD-\oneoneea with global mutation. However, we will also 
investigate carefully chosen scenarios where 
global mutations are superior.

This paper is structured as follows: in
Section~\ref{sec:preliminaries}, we state the classical 
RLS algorithm and introduce our self-adjusting variants with
stagnation detection; 
moreover, we collect important mathematical tools. Section~\ref{sec:sdrls} shows runtime results for the simpler variant \sdrls, concentrating on the probability 
of leaving local optima, while Section~\ref{sec:sdrlss} 
gives a more detailed analysis of the variant \sdrlss on 
benchmark functions like \onemax and \jump. Section~\ref{sec:needglobalmut} analyzes an example function
which the standard \oneoneea with standard bit mutation 
can solve in polynomial time with high probability whereas the $k$-bit flip mutation with stagnation detection 
needs exponential time. Through improved upper bounds, 
we give in Section~\ref{sec:mst} indications for that our approach may also be superior to static settings on instances of the minimum 
spanning tree problem. This problem and other scenarios 
are investigated experimentally in Section~\ref{sec:experiments} before we finally conclude 
the paper.

\section{Preliminaries}
\label{sec:preliminaries}

\subsection{Algorithms}
In this paper, we consider pseudo-boolean functions $f\colon\{0,1\}^n\to \R$ that 
\wlo are to be maximized. One of the first randomized search heuristics studied in the literature is \textit{randomized local search} (\rls) \citep{DoerrDoerrAlgo16}  displayed in Algorithm~\ref{alg:rls-classic}. This heuristic starts with a random search point and then repeats mutating the point by flipping $s$ uniformly chosen bits (without replacement) and replacing it with the offspring if it is not worse than the parent.

\begin{algorithm}[htb]
	\caption{\rls with static strength~$s$}
	\label{alg:rls-classic}
	\begin{algorithmic}
		\State Select $x$ uniformly at random from $\{0, 1\}^n$
		\For{$t \gets 1, 2, \dots$}
		\State Create $y$ by flipping $s$ bit(s) in a copy of $x$.
		\If{$f(y) \ge f(x)$}
		\State $x \gets y$.
		\EndIf
		\EndFor
	\end{algorithmic}
\end{algorithm}

The \emph{runtime} or the \emph{optimization time} of a heuristic on a function~$f$ 
 is the first point time~$t$ where a search point of maximal fitness has been created; often  the expected runtime, \ie, the expected value of this time, is analyzed.
 
 Theoretical research on evolutionary algorithms mainly studies algorithms on simple unimodal well-known benchmark problems 
 like \[
 \onemax(x_1,\dots,x_n)\coloneqq \ones{x},\]
 but also on the multimodal $\jump_m$ 
 function with gap size~$m$ defined as follows:
 \[
 \jump_m(x_1,\dots,x_n) = 
 \begin{cases}
 m + \ones{x} & \text{ if $\ones{x}\le n-m$ or $\ones{x}=n$}\\
 n-\ones{x} & \text{ otherwise}
 \end{cases}
 \]

  The mutation used in \rls is a local mutation as it only produces
 a limited number of offspring.  This mutation, which we call $s$-flip in the following (in the introduction, we used the classical name $k$-bit flip), flips exactly $s$ bits randomly chosen from the bit string of length~$n$, so for any point $x\in \{0,1\}^n$, \rls can just sample from $\binom{n}{s}$ possible points. As a result, $s$-flip
 is often more efficient compared to global mutations when we know the difficulty of making progress 
  since the algorithm just looks at a certain part of the search space. To be more precise, we recall the so-called \emph{gap} of the point $x\in \{0,1\}^n$ defined in \cite{RajabiWittGECCO20}
as the minimum Hamming distance to points with the strictly larger fitness function value. Formally,
\[\gap(x)\coloneqq \min\{H(x,y) : f(y)>f(x) , y\in \{0,1\}^n\}. \]
It is not possible to make progress by flipping less than $\gap(x)$ bits of the current search point~$x$. However, if the algorithm uses the $s$-flip with $s=\gap(x)$, it can make progress with a positive probability. In addition, on unimodal functions where the gap of all points in the search space (except for global optima) is one, the algorithm makes progress with strength~$s=1$. 

Nevertheless, understanding the difficulty of a local optimum has not generally been possible so far, and benefiting from domain knowledge to use it to determine the strength is not always feasible in the perspective of black-box optimization. Therefore, despite the advantages of $s$-flip, global mutations, \eg standard bit mutation, which can produce any point in the search space, have been used in the literature frequently. For example, the \ooea that uses a similar approach to Algorithm~\ref{alg:rls-classic} benefits from standard bit mutation that implicitly uses the binomial distribution to determine how many bits must flip. Consequently, even if the algorithm uses strength~$1$ (e.g. mutation rate~$1/n$), with a positive probability, the algorithm can escape from any local optimum.

We study the search and success probability 
of Algorithm~\ref{alg:rls-classic} and its relation to stagnation 
detection more closely. 
With similar arguments as presented in \cite{RajabiWittGECCO20}, if the gap of the current search point is~$1$ then the algorithm makes an improvement with probability~$1/R$ at strength~$1$, 
and the probability of not finding it in $n\ln R$ steps is at most 
$
(1-1/n)^{n\ln R}\le 1/R$  (where $R$ 
is a parameter to be discussed).
Similarly, the probability of not finding an improvement for a point with gap of $k$ within $\binom{n}{k} \ln R$ steps is at most 
\[
\left(1- \frac 1{\binom{n}{k}}\right)^{\binom{n}{k}\ln R}\le \frac 1R.
\]
Hence, after $\binom{n}{k} \ln R$ steps without improvement there is a probability of at least $1-1/R$ that no improvement at Hamming distance~$k$ exists, so for enough large~$R$ the probability of failing is small.

We consider this idea to develop the first algorithm. We add the stagnation detection mechanism to \rls to manage the strength~$s$. As shown in Algorithm~\ref{alg:sdrls},
hereinafter called \sdrls, the initial strength is~1. Also, there is a counter~$u$ for counting the number of unsuccessful steps to find the next  after the last success. When the counter exceeds the threshold of $\binom{n}{s}\ln R$, strength~$s$ is increased by one, and when the algorithm makes progress, the counter and strength are reset to their initial values. In the case that the algorithm is failed to have a success where the strength is equal to the gap of the current search point, the algorithm misses the chance of making progress. Therefore, with probability $1/R$, the optimization time would be infinitive. Choosing a large enough~$R$ to have an overwhelming large probability of making progress could be a solution to this problem. However, we propose another algorithm that resolves this issue, although the running time is not always as efficient as with Algorithm~\ref{alg:sdrls}.

In Algorithm~\ref{alg:sdrls_star}, hereinafter called \sdrlss,  we introduce a new variable~$r$ called radius. This parameter determines the largest Hamming distance from the current search point that algorithm must investigate. In details, when the radius becomes $r$, the algorithm starts with strength~$r$ (\ie, $s=r$) and when the threshold is exceeded, it decreases the strength by one as long as the strength is greater than 1. This results in a more robust behavior. In the case that the threshold exceeds and the current strength is $1$, the radius is increased by one to cover a more expanded space. Also, when the radius exceeds $n/2$, the algorithm increases the radius to $n$, which means that the algorithm covers all possible strengths between~$1$ and~$n$. We note that 
the strategy of repeatedly 
returning to lower strengths remotely resembles the 
$1/5$-rule with rollbacks proposed in \cite{BassinBuzdalovGECCO19}.

\begin{algorithm}[htb]
	\caption{RLS with stagnation detection (\sdrls)}
	\label{alg:sdrls}
	\begin{algorithmic}
		\State Select $x$ uniformly at random from $\{0, 1\}^n$ and set $s_1 \gets 1$.
		\State $u\gets 0$.
		\For{$t \gets 1, 2, \dots$}
		\State Create $y$ by flipping $s_t$ bits in a copy of $x$ uniformly.
		\State $u\gets u+1$.
		\If{$f(y) > f(x)$}
		\State $x \gets y$.
		\State $s_{t+1}\gets 1$.
		\State $u\gets 0$.
		\ElsIf {$f(y) = f(x)$ \textbf{and} $s_t=1$}
		\State $x \gets y$.

		\EndIf
		\If{$u > \binom{n}{s}\ln R$} 
		\State $s_{t+1}\gets \min\{s_t+1,n\}$.
		\State $u\gets 0$.
		\Else
		\State $s_{t+1}\gets s_t$.
		\EndIf
		\EndFor
	\end{algorithmic}
\end{algorithm}

\begin{algorithm}[htb]
	\caption{RLS with robust stagnation detection (\sdrlss)}
	\label{alg:sdrls_star}
	\begin{algorithmic}
		\State Select $x$ uniformly at random from $\{0, 1\}^n$ and set $r_1 \gets 1$ and $s_1 \gets 1$.
		\State $u\gets 0$.
		\For{$t \gets 1, 2, \dots$}
		\State Create $y$ by flipping $s_t$ bits in a copy of $x$ uniformly.
		\State $u\gets u+1$.
		\If{$f(y) > f(x)$}
		\State $x \gets y$.
		\State $s_{t+1}\gets 1$.
		\State $r_{t+1}\gets 1$.
		\State $u\gets 0$.
		\ElsIf {$f(y) = f(x)$ \textbf{and} $r_t=1$}
		\State $x \gets y$.

		\EndIf
		\If{$u > \binom{n}{s_t}\ln R$}
		\If{$s_t=1$}
		    \IfThenElse {$r_t<n/2$}{$r_{t+1}\gets r_t+1$}{$r_{t+1}\gets n$}
		    \State $s_{t+1}\gets r_{t+1}.$
		    \Else
		    \State $r_{t+1}\gets r_t$.
		    \State $s_{t+1}\gets s_t-1$.
		\EndIf
		\State $u\gets 0$.
		\Else
		\State $s_{t+1}\gets s_t$.
	    \State $r_{t+1}\gets r_t$.
		\EndIf
		\EndFor
	\end{algorithmic}
\end{algorithm}

The parameter~$R$ represents the probability of failing to find an improvement at the ``right'' strength. More precisely, as we will see in Theorem~\ref{theo:gapxforrls} and Lemma~\ref{lem:failure-probability} (for \sdrls and \sdrlss, respectively), the probability of not finding an improvement where there is a potential of making progress is at most $1/R$.
We recommend $R\ge \card{\im f}$ for \sdrls (where $\im f$ is the 
image set of~$f$), and for a constant $\epsilon$, $R\ge n^{3+\epsilon}\cdot \card{\im f}$ for \sdrlss, resulting in that the probability of ever missing an improvement at the right strength is sufficiently small throughout the run.

\subsection{Mathematical tools}
The following lemma containing some combinatorial inequalities will be used in the analyses of the algorithms. The first part of the lemma seems to be well known and has already been proved in \cite{MathoverflowML17} and is also a consequence of Lemma~1.10.38 
in \cite{DoerrProbabilisticBookChapter}.  The second part follows from elementary manipulations.

\begin{lemma} \label{lem:partial-sum}
\parbox[t]{\textwidth}{For any integer $m\le n/2$, we have 
	\begin{enumerate}
	    \item $\sum_{ i=1}^{m}\binom{n}{i}\leq \frac{n-(m-1)}{n-(2m-1)} \binom{n}{m}$,
        \item $\binom{n}{M}\le \binom{n}{m}\left(\frac{n-m}{m}\right)^{M-m}$ for $m<M<n/2$. \label{lem:partial-sum:M}
	\end{enumerate}}
\end{lemma}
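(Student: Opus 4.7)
Plan of attack. Both parts will follow from controlling the ratio of consecutive binomial coefficients, namely the identity $\binom{n}{k-1}/\binom{n}{k}=k/(n-k+1)$.

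For part~(1), I would rewrite the sum in reverse order, $\sum_{i=1}^{m}\binom{n}{i}=\sum_{j=0}^{m-1}\binom{n}{m-j}$, and telescope:
\[
\binom{n}{m-j}=\binom{n}{m}\prod_{\ell=1}^{j}\frac{m-\ell+1}{n-m+\ell}.
\]
Each factor in the product is maximized at $\ell=1$ (numerator is decreasing in $\ell$, denominator increasing), so every factor is bounded above by $q:=m/(n-m+1)$. The assumption $m\le n/2$ yields $q<1$, hence
\[
\sum_{j=0}^{m-1}\binom{n}{m-j}\le \binom{n}{m}\sum_{j=0}^{m-1}q^{j}\le \binom{n}{m}\cdot\frac{1}{1-q}=\binom{n}{m}\cdot\frac{n-m+1}{n-2m+1},
\]
which is exactly the claimed bound $\frac{n-(m-1)}{n-(2m-1)}\binom{n}{m}$.

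For part~(2), I would use the analogous identity in the opposite direction:
\[
\frac{\binom{n}{M}}{\binom{n}{m}}=\prod_{i=m+1}^{M}\frac{n-i+1}{i}.
\]
The map $i\mapsto (n-i+1)/i=(n+1)/i-1$ is decreasing, so each factor is at most the first one, $(n-m)/(m+1)\le (n-m)/m$. Multiplying the $M-m$ factors gives $\binom{n}{M}\le\binom{n}{m}\bigl((n-m)/m\bigr)^{M-m}$, as required.

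Neither part looks genuinely difficult; the main thing to get right is the direction of the monotonicity argument and the geometric-series bound in part~(1), which requires $m\le n/2$ to ensure $q<1$. I expect no real obstacle beyond bookkeeping the indices carefully so that the ratio $m/(n-m+1)$ appears in the denominator $n-2m+1$ rather than an off-by-one variant such as $n-2m$ or $n-2m+2$.
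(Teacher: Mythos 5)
Your proposal is correct and follows essentially the same route as the paper's proof: part~(1) bounds each ratio $\binom{n}{m-j}/\binom{n}{m}$ by $\left(\frac{m}{n-m+1}\right)^{j}$ and sums the geometric series to get $\frac{n-m+1}{n-2m+1}$, and part~(2) bounds each factor of the telescoping product $\binom{n}{M}/\binom{n}{m}$ by $\frac{n-m}{m}$, exactly as the paper does (up to a harmless difference in how numerator and denominator terms are paired). No gaps.
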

\begin{proof}
    \begin{enumerate}
        \item We use the following proof due to \cite{MathoverflowML17}. Through the equation $\binom{n}{k-1}=\frac{k}{n-k+1}\binom{n}{k}$ which comes from the definition of the binomial coefficient and geometric series sum formula, we achieve the following result:
	\begin{align*}
		\frac{\sum_{ i=1}^{m}\binom{n}{i}}{\binom{n}{m}} 
		& = \frac{\binom{n}{m}}{\binom{n}{m}}+\frac{\binom{n}{m-1}}{\binom{n}{m}}+\dots+\frac{\binom{n}{1}}{\binom{n}{m}}\\
		& < 1+\frac{m}{n-m+1}+\frac{m(m-1)}{(n-m+1)(n-m+2)}+\dots \\
		& \le 1+\frac{m}{n-m+1}+\left(\frac{m}{n-m+1}\right)^2+\dots
		 =  \frac{n-(m-1)}{n-(2m-1)}
	\end{align*}	
        \item For $t\ge m$, by using $\binom{n}{m}=\frac{n-m+1}{m}\binom{n}{m-1}$, we have \begin{align*}
            \binom{n}{M} &=\frac{(n-M+1)\dots(n-M+t)}{M\dots(M-t+1)}\binom{n}{M-t} \\
            & \le \left(\frac{n-M+t}{M-t}\right)^t\binom{n}{M-t}
        \end{align*}
    Thus, by setting $m=M-t$, we obtain the statement.
    \end{enumerate}
\end{proof}

\section{Analysis of the Algorithm \sdrls}
\label{sec:sdrls}
In this section, we study the first algorithm called \sdrls, see Algorithm~\ref{alg:sdrls}. In the beginning of the section, we show upper and lower bounds on the time for escaping from local optima.
Then, in Theorem~\ref{theo:unimodalrls}, we show the important result that on unimodal functions, 
 \sdrls with probability~$1-\card{\im f}/R$
behaves in the same way as \rls with strength~$1$, including the same asymptotic bound on the expected optimization time.

The following theorem shows the time \sdrls takes with probability $1-1/R$ to make progress of search point~$x$ with a gap of $m$. 
\begin{theorem} \label{theo:gapxforrls}
Let $x\in\{0,1\}^n$ be the current search point of \sdrls
 on a  pseudo-boolean function $f\colon\{0,1\}^n\to \R$.
	Define $T_x$ as the time to create a strict improvement if $\gap(x)=m$. Let $U$ be the event of finding an improvement at Hamming distance~$m$. Then, we have
		\begin{align*} 
		\expect{T_x\mid U} \le     
		\begin{cases}
		\binom{n}{m}(1+O(\frac{m\ln R}{n})) & \text{if } m=o(n),\\
		O\left(\binom{n}{m} \ln{R} \right) & \text{if } m=\Theta(n) \wedge m<n/2, \\
		O(2^n\ln R) & \text{if } m\ge n/2.
		\end{cases}
	\end{align*}
	 Moreover, $\prob{U}\ge 1-1/R$.
\end{theorem}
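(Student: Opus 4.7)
The plan is to decompose the behaviour of \sdrls from the point $x$ into phases indexed by the strength $s=1,2,\ldots,m$. Because $\gap(x)=m$, no flip of fewer than $m$ bits can produce a strictly better offspring, so at each strength $s<m$ the stagnation counter inevitably reaches its threshold of $\lceil\binom{n}{s}\ln R\rceil$ and the strength is incremented. Plateau moves at $s=1$ only relocate to a point of equal fitness, whose gap is at most $m$; pessimistically assuming the gap stays $m$ throughout still upper-bounds the time. The total contribution of strengths below $m$ is therefore at most $\sum_{s=1}^{m-1}\lceil\binom{n}{s}\ln R\rceil$ deterministically.

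At strength $s=m$ the definition of the gap guarantees at least one improving $m$-subset of bit positions, so each trial is an independent Bernoulli success with probability $p\ge 1/\binom{n}{m}$. Writing $T_m$ for the geometric number of trials until the first such success and $K_m=\lceil\binom{n}{m}\ln R\rceil$, the event $U$ equals $\{T_m\le K_m\}$, and
\[
\prob{\bar U}\le (1-p)^{K_m}\le e^{-pK_m}\le e^{-\ln R}=1/R,
\]
which proves the stated bound on $\prob{U}$. For the conditional expected time at this strength, the law of total expectation yields $\expect{T_m\mid U}\prob{U}\le \expect{T_m}\le 1/p\le \binom{n}{m}$, hence $\expect{T_m\mid U}\le \binom{n}{m}/(1-1/R)=\binom{n}{m}(1+O(1/R))$.

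To finish, I would combine the two contributions using Lemma~\ref{lem:partial-sum}(a) in each of the three regimes. For $m=o(n)$, the identity $\binom{n}{m-1}=\tfrac{m}{n-m+1}\binom{n}{m}$ together with the lemma applied at index $m-1$ gives $\sum_{s=1}^{m-1}\binom{n}{s}=O\bigl(\binom{n}{m}\cdot m/n\bigr)$, so the phase-$<m$ contribution is $O(\binom{n}{m}\cdot m\ln R/n)$; combining this with the strength-$m$ contribution (and noting that the recommended $R\ge \card{\im f}$ forces $1/R=O(m\ln R/n)$ in the regimes considered) yields the target $\binom{n}{m}(1+O(m\ln R/n))$. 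For $m=\Theta(n)$ with $m<n/2$, the lemma gives $\sum_{s=1}^{m}\binom{n}{s}=O(\binom{n}{m})$ directly, resulting in $O(\binom{n}{m}\ln R)$. For $m\ge n/2$, the trivial bounds $\sum_{s=1}^{m}\binom{n}{s}\le 2^n$ and $\binom{n}{m}\le 2^n$ give $O(2^n\ln R)$.

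The main technical subtlety lies in the conditional expectation at strength $m$, which however collapses to the one-line argument above via the law of total expectation. The only other delicate point is verifying that plateau moves at $s=1$ do not invalidate the phase decomposition; this holds because such moves neither reset the stagnation counter nor can they increase the gap of the current point, so the pessimistic upper bound remains valid throughout.
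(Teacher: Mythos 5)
Your argument follows the paper's proof essentially step for step: the same decomposition into a deterministic $\sum_{s=1}^{m-1}\binom{n}{s}\ln R$ contribution from the phases below strength~$m$ plus a geometric waiting time at strength~$m$, the same bound $\bigl(1-\binom{n}{m}^{-1}\bigr)^{\binom{n}{m}\ln R}\le 1/R$ for $\prob{\overline{U}}$, and the same use of Lemma~\ref{lem:partial-sum} to collapse the sum in each of the three regimes. One step, however, does not hold as written. From $\expect{T_m\mid U}\prob{U}\le \expect{T_m}$ you obtain $\expect{T_m\mid U}\le\binom{n}{m}/(1-1/R)=\binom{n}{m}(1+O(1/R))$, and to reach the stated $\binom{n}{m}(1+O(\frac{m\ln R}{n}))$ you claim that $R\ge\card{\im f}$ forces $1/R=O(\frac{m\ln R}{n})$. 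That implication is false in general: the theorem imposes no lower bound on $R$, and even under the paper's recommendation $\card{\im f}$ (hence $R$) can be a constant while $m\ln R/n=o(1)$, in which case your derivation only yields $\binom{n}{m}(1+O(1/R))$ and the leading constant~$1$ in the first case is lost. The repair is to avoid the detour through the unconditional expectation: conditioning a nonnegative waiting time on the event $\{T_m\le K_m\}$ can only \emph{decrease} its mean (the conditional distribution is a truncation from above), so $\expect{T_m\mid U}\le\expect{T_m}\le\binom{n}{m}$ with no correction factor, which is what the paper implicitly uses when it charges exactly $\binom{n}{m}$ for the strength-$m$ phase.

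A smaller caveat: your justification for dismissing plateau moves at strength~$1$ --- that they ``cannot increase the gap of the current point'' --- is not true; a fitness-neutral one-bit move can change the gap from $m$ to $m+1$ (or to $m-1$), so the premise that an improving $m$-subset exists at the moment the strength reaches~$m$ is not automatic. The paper's own proof makes the same silent idealization of a fixed search point~$x$, so this is a shared imprecision rather than a defect specific to your write-up, but the sentence as you state it is a wrong claim rather than a proof.
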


Compared to the corresponding theorems in \cite{RajabiWittGECCO20}, the 
bounds in Theorem~\ref{theo:gapxforrls} are by a factor of $(\frac{ne}{m})^m/\binom{n}{m}$ (up to lower-order terms) smaller. This 
 speedup is roughly $e$ for $m=1$, \ie, unimodal functions (like \onemax) but 
becomes less pronounced for larger~$m$ since, intuitively, the number of flipped 
bits in a standard bit mutation will become more and more concentrated  and start resembling the $m$-bit flip mutation.

\begin{proofof}{Theorem~\ref{theo:gapxforrls}}
    The algorithm~\sdrls can make an improvement only where the current strength~$s$ is equal to $m$ and the probability of not finding an improvement during this phase is
    \[
        \left(1-\binom{n}{m}^{-1}\right)^{\binom{n}{m}\ln R} \le \frac 1{R}.
    \]
    
    If the improvement event happens, the running time of the algorithm to escape from this local optimum is
    \begin{align*}
	\expect{T_x \mid U} 
	&< \underbrace{ \sum_{i=1}^{m-1}\binom{n}{i} \ln R }_{=:S_1} +
	\underbrace{ \binom{n}{m}}_{=:S_2},
	\end{align*}
	where $S_1$ is the number of iterations for $s<m$ and $S_2$ is the expected number of iterations needed to make an improvement where $s=m$.
	
	By using Lemma~\ref{lem:partial-sum} for $m<n/2$, we have
	\begin{align*}
	\expect{T_x \mid U} 
	&< \sum_{i=1}^{m-1}\binom{n}{i} \ln R + \binom{n}{m} < \frac{n-m+2}{n-2m+3}\binom{n}{m-1}\ln R + \binom{n}{m}\\
	& = \frac{n-m+2}{n-2m+3} \cdot \frac{m}{n-m+1}\binom{n}{m}\ln R + \binom{n}{m} \\
	& = \binom{n}{m} \left( \frac{n-m+2}{n-2m+3} \cdot \frac{m}{n-m+1}\ln R +1 \right),
	\end{align*}
	and for $m\ge n/2$, we know that $\sum_{i=1}^{n}\binom{n}{i}<2^n$, so we can compute
	\begin{align*}
	\expect{T_x \mid U} 
	&= \sum_{i=1}^{m-1}\binom{n}{i} \ln R + \binom{n}{m} \le O(2^n \ln R)
	\end{align*}
Altogether we achieve
	\begin{align*}
		\expect{T_x\mid U} \le     
		\begin{cases}
		\binom{n}{m}(1+O(\frac{m\ln R}{n})) & \text{if } m=o(n),\\
		O\left(\binom{n}{m} \ln{R} \right) & \text{if } m=\Theta(n) \wedge m<n/2, \\
		O(2^n\ln R) & \text{if } m\ge n/2. \makebox[0pt]{\hspace*{6.9cm} }
		\end{cases}
	\end{align*}	 
\end{proofof}

Using the previous lemma, we obtain the following result that allows us to reuse existing results for RLS on unimodal functions.

\begin{theorem} \label{theo:unimodalrls} 
	Let $f\colon\{0,1\}^n\to \R$ be a unimodal function and consider  \sdrls with $R\ge\card{\im f}$. Then, with probability at least $1-\frac{\card{\im f}}{R}$, the 
	\sdrls never increases the radius and behaves stochastically like \rls before finding an optimum of~$f$. 
\end{theorem}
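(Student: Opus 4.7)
The plan is to couple the trajectory of \sdrls with that of \rls at strength $s=1$, conditional on the event that \sdrls never increases its strength above~$1$ before reaching an optimum, and then to bound the probability of the complementary event by a union bound over the fitness-improving steps.

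First, I would invoke unimodality of~$f$: every non-optimal search point~$x$ has a Hamming neighbour with strictly larger $f$-value, so $\gap(x)=1$. Applying Theorem~\ref{theo:gapxforrls} with $m=1$ then shows that, starting from such an~$x$ right after a reset (\ie, with $s_t=1$ and $u=0$), the event $U$ of finding a strict improvement before the counter reaches $\binom{n}{1}\ln R=n\ln R$ has probability at least $1-1/R$. Since that threshold is exactly the budget at strength~$1$, conditional on~$U$ the strength is not increased during this phase, and after the improvement Algorithm~\ref{alg:sdrls} resets both $s_t$ and $u$, so the same situation recurs at the next fitness level.

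Next, I would chain these phases and apply a union bound. A run of \sdrls visits at most $\card{\im f}$ different fitness values before reaching an optimum, so at most $\card{\im f}-1$ improvement phases occur. Hence the probability that the strength ever rises above~$1$ is at most $(\card{\im f}-1)/R \le \card{\im f}/R$, which matches the claimed bound.

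On the complementary event I would exhibit a stepwise coupling with \rls at strength~$1$: both algorithms flip a single uniformly random bit and accept the offspring exactly when $f(y)\ge f(x)$ (in \sdrls this acceptance is realised jointly by the strict-improvement branch and the $s_t=1$ equal-fitness branch), so the transition kernels agree and the induced distributions over trajectories up to the first optimum are identical. The main obstacle will be verifying that equal-fitness plateau moves, which in \sdrls do not reset the counter~$u$, cannot secretly push the strength above~$1$; this is absorbed into Theorem~\ref{theo:gapxforrls}, whose waiting-time estimate $(1-1/n)^{n\ln R}\le 1/R$ accounts for every iteration at strength~$1$, improving or not, so the strength remains pinned to~$1$ on the coupled event and the two processes are indistinguishable there.
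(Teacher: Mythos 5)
Your proposal is correct and follows essentially the same route as the paper's own proof: unimodality gives $\gap(x)=1$ everywhere, the per-phase failure probability at strength~$1$ is bounded by $\left(1-\tfrac1n\right)^{n\ln R}\le 1/R$, and a union bound over the at most $\card{\im f}$ improving steps yields the claimed $\card{\im f}/R$. Your additional remarks on the stepwise coupling with \rls and on equal-fitness plateau moves not resetting the counter are sound (each plateau point still has gap~$1$, so the per-step improvement probability stays at least $1/n$) and merely spell out what the paper leaves implicit.
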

\begin{proof}
	As on unimodal functions, the gap of all points is~$1$, the probability of not finding and improvement is \[\left(1-\binom{n}{m}^{-1}\right)^{\binom{n}{m}\ln R} \le \frac 1{R}.\]
	This argumentation holds for each improvement that has to be found. 
	Since at most $\card{\im f}$ improving steps happen before 
	finding the optimum, by a union bound the probability of algorithm~\ref{alg:sdrls} ever increasing the strength beyond~$1$ is at most 
	$\card{\im (f)}\frac{1}{R}$, which proves the lemma.
 \end{proof}

With these two general results, we conclude the analysis of \sdrls and turn to the variant \sdrlss that 
always has finite expected optimization time. In fact, 
we will present similar results in general optimization 
scenarios and supplement them by analyses on specific 
benchmark functions. It is possible
to analyze the simpler \sdrls on these benchmark functions 
as well, but we do not feel that this gives additional 
insights.

\section{Analysis of the Algorithm \sdrlss}
\label{sec:sdrlss}

	In this section, we turn to the algorithm \sdrlss that iteratively returns to lower strengths to avoid missing the ``right'' strength. 
We recall $T_x$ as the number of steps \sdrlss takes to find an 
improvement point from the current search point $x$. Let phase~$r$ consists of all points of time where radius~$r$ is used in the algorithm. When the algorithm enters phase~$r$, it starts with strength $r$, but when the counter exceeds the threshold, the strength decreases by one as long as it is greater than 1. In the case of strength 1, the radius~$r$ is increased to $r+1$ (or to $n$ if $r+1$ is at least $n/2$), so the algorithm enters phase~$r+1$ (or phase~$n$).

Let \emph{$E_r$} be the event of \textbf{not} finding the optimum within 
phase~$r$, and \emph{$U_{i}^{j}$} for $j>i$ be the event of not finding the optimum 
during phases $i$ to $j-1$ and finding it in phase $j$. In other words, $U_i^j=E_i\cap \dots \cap E_{j-1} \cap \overline{E_{j}}$. For $i=j$, we define $\emph{$U_{i}^{i}$}=\overline{E_i}$. We obtain the following result on the failure probability which follows from the fact that the algorithm tries to find an improvement for $\binom{n}{m}\ln R$ iterations with a probability of success of $\binom{n}{m}^{-1}$ when the radius is at least $m$.

\begin{lemma} \label{lem:failure-probability}
	Let $x\in\{0,1\}^n$ be the current search point of \sdrlss on a pseudo-boolean fitness function $f\colon\{0,1\}^n\to \R$ 
	and let $m=\gap(x)$. 
	Then 
	\begin{align*}
		\prob{E_r} \le     
		\begin{cases}
		\frac{1}{R} & \text{ if } m \le r<\frac n2\\
		0 & \text{if } r=n.
		\end{cases}
	\end{align*}
\end{lemma}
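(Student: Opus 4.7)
The plan is to reduce the analysis of phase~$r$ to the sub-window of iterations during which the strength $s_t$ equals $m=\gap(x)$, and then apply the standard concentration calculation $(1-1/\binom{n}{m})^{\binom{n}{m}\ln R}\le e^{-\ln R}=1/R$.

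For the case $m\le r<n/2$, I would first trace through the pseudocode of Algorithm~\ref{alg:sdrls_star} to confirm that, after entry into phase~$r$, the algorithm visits the strengths $s=r,r-1,\dots,1$ in that order, spending exactly $\binom{n}{s}\ln R$ iterations at each value before the counter~$u$ triggers a decrement. Since $m\le r$, the sub-phase with $s=m$ is reached. For $r>1$ the radius $r_t$ stays at $r\ge 2$ throughout the phase, so the neutral-acceptance branch (active only when $r_t=1$) is never triggered, and the current search point remains equal to~$x$ throughout the $\binom{n}{m}\ln R$ iterations at strength~$m$. Because $\gap(x)=m$, there is a fixed improving point~$y$ at Hamming distance exactly~$m$ from~$x$, which an independently drawn $m$-bit flip hits with probability $1/\binom{n}{m}$; the standard computation above then yields $\prob{E_r}\le 1/R$. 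The boundary case $r=1$ (which forces $m=1$) is analogous but requires a minor extra step because neutral drift on the plateau is possible; the cleanest fix is to couple to the sub-chain with the search point frozen at~$x$, or to observe that the set of improving points is preserved by neutral moves so that the per-iteration success probability at $s=1$ remains at least $1/n$ at every point with gap~$1$ on the plateau.

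For the case $r=n$, I would first verify from the pseudocode that once $r_t=n$ holds, the radius is never lowered: the only branch that modifies $r_{t+1}$ is conditioned on $s_t=1$, and both of its sub-cases leave $r_{t+1}=n$ whenever $r_t=n$ (the condition $r_t<n/2$ fails, so the else-branch assigns $r_{t+1}=n$). Hence phase~$n$ is infinite, and the algorithm cycles through the strengths $n,n-1,\dots,1$ indefinitely. Applying the first-case argument to each cycle (with $m$ replaced by the current gap, which is at most~$n$) yields a conditional per-cycle failure probability of at most $1/R<1$ whenever $R>1$. Telescoping over $N$ cycles gives overall failure probability at most $(1/R)^N\to 0$ as $N\to\infty$, so $\prob{E_n}=0$.

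The chief subtlety I anticipate is the treatment of neutral moves, which can drift the current search point along a plateau in phases $r=1$ and $r=n$. This is vacuous whenever $r>1$, so the clean bound applies there directly; in the remaining cases the multiplicative argument only needs a uniform positive lower bound on the per-iteration (equivalently per-cycle) success probability at the matching strength, and this is guaranteed because every non-optimal current search point still admits an improving neighbor at some Hamming distance in $\{1,\dots,n\}$.
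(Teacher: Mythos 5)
Your proof is correct and follows essentially the same route as the paper's: isolate the $\binom{n}{m}\ln R$ iterations spent at strength $s=m$ within phase~$r$, bound the failure probability there by $\left(1-\binom{n}{m}^{-1}\right)^{\binom{n}{m}\ln R}\le 1/R$, and handle $r=n$ by observing that the phase cycles forever so the failure probability telescopes to $0$. The extra care you take with neutral moves goes beyond what the paper records and is welcome, though note that in phase~$n$ the neutral-acceptance branch requires $r_t=1$ and therefore never fires, so the drift concern you raise for that case is moot.
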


The following lemma bounds the time to leave a local optimum conditional 
on that the ``right'' strength was missed.

\begin{proof}
Assume $r \ge m$. During phase~$r$, the algorithm spends $\binom{n}{m} \ln R$ steps at strength~$m$ until it changes the strength or phase. Then, the 
		probability of not improving at strength~$s=m$ in phase~r is at most
		\begin{align*}
		\prob{E_r} & = \left(1-\binom{n}{m}^{-1}\right)^{\binom{n}{m}\ln R} \le \frac{1}{R}.
		\end{align*}
		
	During phase~$n$, the algorithm does not change radius~$r$ anymore, and it continues to flip $s$ bits with different $s$ containing $m$ until making progress so the probability of eventually failing to find the improvement in this phase is~$0$. 
\end{proof}
\begin{lemma} \label{lem:runtime-of-Em}
    Let $x\in\{0,1\}^n$ with $m=\gap(x)<n/2$ be the current search point of \sdrlss with $R\ge n^{3+\epsilon}\cdot \card{\im f}$ for an arbitrary constant $\epsilon>0$
 on a  pseudo-boolean function $f\colon\{0,1\}^n\to \R$ and $T_x$ be the time to create a strict improvement. Then, we have
		\begin{align*}
		\expect{T_x\mid E_m} = o\left(\frac R{\card{\im f}}\binom{n}{m}\right),
	\end{align*}
	where $E_m$ is the event of not finding an optimum when the radius~$r$ equals $m$.
\end{lemma}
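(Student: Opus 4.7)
The plan is to decompose $\expect{T_x\mid E_m}$ by which subsequent phase produces the first strict improvement, combining the probability of reaching that phase (via Lemma~\ref{lem:failure-probability}) with a deterministic bound on its length (via Lemma~\ref{lem:partial-sum}). The algorithm's behaviour after $E_m$ is determined by the transition rule of Algorithm~\ref{alg:sdrls_star}: it enters phases $m+1,m+2,\dots,K$ with $K:=\lceil n/2\rceil$ and, if still unsuccessful, finally the special phase~$n$.

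From Lemma~\ref{lem:failure-probability}, $\prob{E_r}\le 1/R$ for each $r\in\{m+1,\dots,K\}$, so the probability of reaching phase $r\in\{m+1,\dots,K\}$ given $E_m$ is at most $(1/R)^{r-m-1}$ and the probability of entering phase~$n$ is at most $(1/R)^{K-m}$. In phase $r\le K$ the algorithm spends at most $\sum_{s=1}^r\binom{n}{s}\ln R$ steps, which by part~(a) of Lemma~\ref{lem:partial-sum} is $O(\binom{n}{r}\ln R)$ since $r<n/2$. In phase~$n$ it cyclically sweeps through the strengths $n,n-1,\dots,1$; within each sweep it succeeds at strength $s=m$ with probability at least $1-1/R$, so the expected number of sweeps is $O(1)$ and the expected time spent in phase~$n$ is $O(2^n\ln R)$.

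By the law of total expectation this yields
\[\expect{T_x\mid E_m}\le O(\binom{n}{m}\ln R)+\sum_{r=m+1}^{K}O(\binom{n}{r}\ln R)(1/R)^{r-m-1}+(1/R)^{K-m}O(2^n\ln R),\]
where the first term collects the (deterministic) time in phases~$1$ through~$m$ via two applications of Lemma~\ref{lem:partial-sum}~(a). For the middle sum I would apply Lemma~\ref{lem:partial-sum}~(b) with $M=r<n/2$ to obtain $\binom{n}{r}\le\binom{n}{m}((n-m)/m)^{r-m}$, turning it into a geometric series with ratio $(n-m)/(mR)$, which is $o(1)$ under $R\ge n^{3+\epsilon}\card{\im f}$; its value is at most $O((n/m)\binom{n}{m}\ln R)$, well below the target $R\binom{n}{m}/\card{\im f}\ge n^{3+\epsilon}\binom{n}{m}$.

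The main obstacle is the phase-$n$ contribution, which must be $o(R\binom{n}{m}/\card{\im f})$ even when $m$ is close to $n/2$ and the reach-probability $(1/R)^{K-m}$ is as large as $1/R$. I would bound $2^n\le (n+1)\binom{n}{K}\le O(n)\binom{n}{K-1}$ and then invoke Lemma~\ref{lem:partial-sum}~(b) at $M=K-1<n/2$ to get $2^n/\binom{n}{m}\le O(n)((n-m)/m)^{K-1-m}$, giving $(1/R)^{K-m}\cdot 2^n\ln R\le O(n\binom{n}{m}\ln R/R)\cdot((n-m)/(mR))^{K-1-m}$; the last factor is at most~$1$ because $(n-m)/(mR)<1$ and $K-1-m\ge 0$. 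Dividing by $R\binom{n}{m}/\card{\im f}$ and using $\card{\im f}/R\le 1/n^{3+\epsilon}$ shows this ratio is $o(1)$, and this is the only place where the strong assumption $R\ge n^{3+\epsilon}\card{\im f}$ (rather than just $R\ge\card{\im f}$) is genuinely needed.
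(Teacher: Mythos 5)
Your proof follows essentially the same route as the paper's: the same decomposition by the phase in which the first improvement occurs, Lemma~\ref{lem:failure-probability} for the reach probabilities $R^{-(r-m-1)}$, Lemma~\ref{lem:partial-sum} to compare the phase lengths to $\binom{n}{m}$, and the expected $R/(R-1)$ sweeps costing $O(2^n\ln R)$ each for phase~$n$. Two cosmetic caveats: the constants hidden in your $O(\binom{n}{m}\ln R)$ and $O(\binom{n}{r}\ln R)$ from Lemma~\ref{lem:partial-sum}(a) are of order $\left(\frac{n-m}{n-2m}\right)^2$, hence up to $\Theta(n^2)$ rather than absolute (this still vanishes against $R/\card{\im f}\ge n^{3+\epsilon}$), and your closing remark is backwards --- it is these early-phase and middle-sum terms, not the phase-$n$ term (which carries an extra factor $1/R$), that actually consume the $n^{3+\epsilon}$ assumption.
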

The reason behind the factor~$R/\card{\im f}$ in Lemma~\ref{lem:runtime-of-Em} is that for proving a running time of \sdrlss on a function like~$f$, the event~$E_m$  happens with probability~$1/R$ for each point in $\im f$, so in the worst case, during the run, there are expected $\card{\im f}/R$ search points where the counter exceeds the threshold, resulting in an expected number of at most  $\card{\im f}/R \cdot o(R/\card{\im f} \binom{n}{m})$ extra iterations for the whole run in the case of exceeding the thresholds. Also, note that we always have $R/\card{\im f} = \Omega(1)$ since according to the assumption, $R>\card{\im f}$.

\begin{proofof}{Lemma \ref{lem:runtime-of-Em}}
	Using the law of total probability with respect to the events $U^i_{m+1}$ defined above, we have
\begin{align*}
	\expect{T_x \mid E_m} = \underbrace{ \sum_{i=m+1}^{\floor{n/2-1}} \expect{T_x \mid U_{m+1}^{i}} \prob{U_{m+1}^{i}}}_{=:S_1} + \underbrace{\expect{T_x \mid U_{m+1}^n} \prob{U_{m+1}^n}}_{=:S_2}.
	\end{align*}
	
  	In order to estimate $\prob{U_{m+1}^i}$, using Lemma~\ref{lem:failure-probability}, we have 
  	\[\prob{U_{m+1}^i}< \prod_{j=m+1}^{i-1}\prob{E_j}<R^{-(i-m-1)}.\]
  	
  	For $S_1$, by using Lemma~\ref{lem:partial-sum} multiple times, we compute
	\begin{align*}
	&\sum_{i=m+1}^{\floor{n/2-1} } \expect{T_x \mid U_{m+1}^{i}} \prob{U_{m+1}^{i}}   = \sum_{i=m+1}^{\floor{n/2-1}} \sum_{r=1}^{i}\sum_{s=1}^{r} \binom{n}{s}\ln R \cdot R^{-(i-m-1)} \\
	\qquad & \le \sum_{i=m+1}^{\floor{n/2-1}} \sum_{r=1}^{i} \frac{n-r+1}{n-2r+1}\binom{n}{r}\ln R  \cdot R^{-(i-m-1)} \\
	& \le R \sum_{i=m+1}^{\floor{n/2-1}}  \left(\frac{n-i+1}{n-2i+1}\right)^2\binom{n}{i}\ln R  \cdot R^{-(i-m)} \\
	& \le R\sum_{i=m+1}^{\floor{n/2-1}}  \left(\frac{n-i+1}{n-2i+1}\right)^2\left(\frac{n-m}{m}\right)^{i-m}\binom{n}{m} \ln R  \cdot R^{-(i-m)} \\
	& \le \frac{R}{\card{\im f}}\binom{n}{m} \sum_{i=m+1}^{\floor{n/2-1}}  \left(\frac{n-i+1}{n-2i+1}\right)^2\left(\frac{n-m}{n\cdot m}\right)^{i-m}  \cdot \frac{n^{i-m}\card{\im f}\ln R}{R^{(i-m)}}. 
	\end{align*}
	Using the fact that $R\ge n^{3+\epsilon}\cdot \card{\im f}$ and $i>m$, the 
	last expression 
	is bounded from above by $o(R/\card{\im f}\binom{n}{m})$.
	
Regarding $S_2$, when radius~$r$ is increased to $n$, the algorithm mutates $s$ bits of the the current search point for all possible strengths of $1$ to $n$ periodically. In each cycle through different strengths, according to lemma \ref{lem:failure-probability}, the algorithm escapes from the local optimum with probability $1-1/R$ so there are  $R/(R-1)$ cycles in expectation via geometric distribution. Besides, each cycle of radius~$n$ costs $\sum_{i=s}^{n}\binom{n}{s}\ln R$. Overall, we have $\frac{R}{R-1}\sum_{s=1}^{n}\binom{n}{s}\ln R$ extra fitness function calls if the algorithm fails to find the optimum in the first $\floor{n/2-1}$ phases happened with the probability of $R^{-(\floor{n/2}-m-1)}$. Thus, we have
\begin{align*}
    &\expect{T_x \mid U_{m+1}^n}\prob{U_{m+1}^n} \le \left( \sum_{r=1}^{\floor{n/2}}\sum_{s=1}^{r}\binom{n}{s}\ln R + \frac{R}{R-1} \sum_{s=1}^{n}\binom{n}{s}\ln R \right) R^{-(\floor{n/2}-m-1)} \\ 
    &\qquad \le R\cdot \left( 1+\frac{R}{R-1}\right) 2\sum_{s=1}^{n/2}\binom{n}{s}\ln R \cdot R^{-(\floor{n/2}-m)} \\
    &\qquad \le R\cdot\left( 1+\frac{R}{R-1}\right) 2\sum_{s=1}^{n/2}\left( \frac{n-m}{m} \right)^{s-m}\binom{n}{m}\ln R \cdot R^{-(\floor{n/2}-m)}\\
    &\qquad = \frac R{\card{\im f}} \cdot \left( 1+\frac{R}{R-1}\right) 2 \binom{n}{m} \sum_{s=1}^{n/2}\left( \frac{n-m}{n \cdot m} \right)^{s-m}\frac{\card{\im f}n^{s-m}\ln R}{R^{(\floor{n/2}-m)}} \\
    &\qquad =o\left(\frac R{\card{\im f}}\binom{n}{m}\right)
\end{align*}
Altogether, we finally have $\expect{T_x\mid E_m}=S_1+S_2=o\left(R/\card{\im f}\binom{n}{m}\right)$ as suggested.  
\end{proofof}

The following theorem and its proof are similar to Theorem~\ref{theo:gapxforrls} but require a more careful analysis to cover the repeated use of smaller strengths. We note that the bounds differ from Theorem~\ref{theo:gapxforrls} only in 
lower-order terms unless $m$ is very big.

\begin{theorem} \label{theo:gapxforrlsstar}
Let $x\in\{0,1\}^n$ be the current search point of \sdrlss with $R\ge n^{3+\epsilon}\cdot \card{\im f}$ for an arbitrary constant $\epsilon>0$
 on a  pseudo-boolean function $f\colon\{0,1\}^n\to \R$.
	Define $T_x$ as the time to create a strict improvement if $\gap(x)=m$. Then, we have
		\begin{align*}
		\expect{T_x} \le     
		\begin{cases}
		\binom{n}{m}\left(1+O\left(\frac{m^2}{n-2m}\ln R\right)\right) & \text{if } m<n/2\\
		2^nn \ln R & \text{if } m\ge n/2
		\end{cases},
	\end{align*}
	and $\expect{T_x}\ge\binom{n}{m}/W$, where $W$ is the number 
	of strictly better search points at Hamming distance~$m$.
\end{theorem}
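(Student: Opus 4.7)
The plan is to condition on whether phase~$m$ succeeds: by the law of total expectation,
\[
\expect{T_x}=\expect{T_x\mid\overline{E_m}}\prob{\overline{E_m}}+\expect{T_x\mid E_m}\prob{E_m}.
\]
Lemma~\ref{lem:failure-probability} provides $\prob{E_m}\le 1/R$ and Lemma~\ref{lem:runtime-of-Em} supplies $\expect{T_x\mid E_m}=o((R/\card{\im f})\binom{n}{m})$, so the second summand contributes $o(\binom{n}{m}/\card{\im f})=o(\binom{n}{m})$, which is absorbed into the claimed lower-order error.

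For the dominant contribution, I would use the structural property that an $s$-flip produces offspring at Hamming distance exactly~$s$: since $\gap(x)=m$, no strength $s<m$ can create an improvement, so phases $1,\dots,m-1$ are forced to run to completion, and under $\overline{E_m}$ the success inside phase~$m$ must occur at its initial strength~$m$. This yields
\[
\expect{T_x\mid\overline{E_m}}\le\sum_{r=1}^{m-1}\sum_{s=1}^{r}\binom{n}{s}\ln R+\binom{n}{m},
\]
where the last summand bounds the conditional expectation of a geometric random variable with parameter $W/\binom{n}{m}$ given that success occurs within $\binom{n}{m}\ln R$ trials (at most $\binom{n}{m}/W\le\binom{n}{m}$).

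The key technical step, and the likely main obstacle, is reducing the double sum to the desired form. Swapping the order of summation gives $\sum_{s=1}^{m-1}(m-s)\binom{n}{s}\le m\sum_{s=1}^{m-1}\binom{n}{s}$; applying the first part of Lemma~\ref{lem:partial-sum} with $m$ replaced by $m-1$ and using $\binom{n}{m-1}=\frac{m}{n-m+1}\binom{n}{m}$ yields an upper bound of order $\frac{m^2(n-m+2)}{(n-2m+3)(n-m+1)}\binom{n}{m}=O\bigl(\frac{m^2}{n-2m}\bigr)\binom{n}{m}$ for $m<n/2$, which multiplied by $\ln R$ gives the announced $O(\binom{n}{m}m^2\ln R/(n-2m))$ correction. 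For $m\ge n/2$, each of the $\lceil n/2\rceil$ pre-phases contributes at most $\sum_{s=1}^{n}\binom{n}{s}\ln R<2^n\ln R$, and phase~$n$ terminates after $O(1)$ expected cycles of length less than $2^n\ln R$ by the geometric distribution (success probability $\ge 1-1/R$ per cycle from Lemma~\ref{lem:failure-probability}), yielding the bound $O(n\cdot 2^n\ln R)$ overall.

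For the lower bound, I would observe that only strength-$m$ iterations can reach the $W$ strictly better points at Hamming distance exactly~$m$, and each such iteration succeeds independently with probability $W/\binom{n}{m}$. By the geometric distribution, the expected number of strength-$m$ iterations until the first such success is $\binom{n}{m}/W$, and since $T_x$ counts every iteration of the algorithm, $\expect{T_x}\ge\binom{n}{m}/W$ follows.
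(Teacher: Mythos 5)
Your proposal is correct and follows essentially the same route as the paper's proof: decompose $\expect{T_x}$ by conditioning on $E_m$, absorb the $E_m$ branch via Lemma~\ref{lem:failure-probability} and Lemma~\ref{lem:runtime-of-Em} into $o(\binom{n}{m})$, bound the pre-phase cost by the double sum $\sum_{r<m}\sum_{s\le r}\binom{n}{s}\ln R$ plus $\binom{n}{m}$ using Lemma~\ref{lem:partial-sum}, and handle $m\ge n/2$ and the lower bound exactly as the paper does. Your reduction of the double sum by swapping the order of summation and bounding $(m-s)\le m$ is in fact a slightly cleaner way to land on the $O\bigl(\frac{m^2}{n-2m}\ln R\bigr)$ error term than the paper's two successive applications of Lemma~\ref{lem:partial-sum}, and your lower-bound argument correctly retains the factor $W$ that the paper's last paragraph drops.
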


\begin{proof}
	Using the law of total probability with respect to $E_m$ defined above as the event of not finding the optimum by the end of phase $m$, we have
	\begin{align*}
	\expect{T_x} &= \underbrace{\expect{T_x \mid \overline{E_m}}\prob{\overline{E_m}}}_{=:S_1} +
	\underbrace{ \expect{T_x \mid E_m} \prob{E_m}}_{=:S_2}.
	\end{align*}
	
	Regarding $S_1$, it takes $\sum_{ i=1}^{m-1} \sum_{ j=1}^{i}\binom{n}{j} \ln R$ steps until \sdrlss 
	increases both radius and strength to~$m$. When the mutation strength is $m$, within an expected number of  $\binom{n}{m}$ steps, a better point will be found. 
	
	In regard to $S_2$, where the optimum is not found by the end of phase $m$, there would be at most $O(R/\card{\im f}\binom{n}{m})$ iterations in expectation through Lemma \ref{lem:runtime-of-Em}. This event, \ie, $E_m$, is happened with the probability of $1/R$.
	
	Altogether, for $m<n/2$, using Lemma~\ref{lem:partial-sum}, we have
	\begin{align*}
	\expect{T_x} &= \prob{\overline{E_m}}\expect{T_x \mid \overline{E_m}} +\prob{E_m}\expect{T_x \mid E_m} \\
	&\le \expect{T_x \mid \overline{E_m}} +\prob{E_m}\expect{T_x \mid E_m} \\
	&\le \sum_{ i=1}^{m-1} \sum_{ j=1}^{i}\binom{n}{j} \ln R + \binom{n}{m} + \frac{1}{R} \cdot o\left( \frac R {\card{\im f}} \binom{n}{m}\right) \\ 
	& \le \sum_{ i=1}^{m-1}
	\frac{n-(i-1)}{n-(2i-1)}
	\binom{n}{i} \ln R + \binom{n}{m} +  o\left(\binom{n}{m}\right)\\
	& \le \left(1+\frac{m}{n-2m+1}\right) \sum_{ i=1}^{m-1}
	\binom{n}{i} \ln R + \binom{n}{m}+ o\left(\binom{n}{m}\right) \\
	& \le \left(1+\frac{m}{n-2m+1}\right) \left(1+\frac{m-1}{n-2m+3}\right) 
	\binom{n}{m} \ln R + \binom{n}{m}+ o\left(\binom{n}{m}\right) \\
	& = \binom{n}{m}O\left(1+\frac{m^2}{n-2m}\ln R\right).
	\end{align*}
	
	For $m\ge n/2$, the algorithm is not able to make an improvement for radius~r less than $n/2$. However, as radius~r is increased to $n$, the algorithm mutates m-bits of the the current search point for all possible strengths of 1 to n periodically. Thus, according to lemma \ref{lem:failure-probability}, the algorithm escapes from the local optimum with probability $1-1/R$ so there are  $R/(R-1)$ cycles in expectation through geometric distribution in this phase. Finally, we compute
	\begin{align*}
	\expect{T_x \mid U_m}  &= \sum_{ i=1}^{\floor{n/2-1}} \sum_{ j=1}^{i}\binom{n}{j} \ln R + \frac{R}{R-1} \sum_{ i=1}^{n} \binom{n}{i} \ln R
	 \le O\left(2^nn\ln R\right).
	\end{align*}

Moreover,  
	the expected number of iterations for making an improvement is at least
	$ \binom{n}{m} $ where the current strength~$s$ equals $m$. The algorithm is not able to have a success with other strengths. Therefore, we have $\expect{T_x}\geq \binom{n}{m}$.  
\end{proof}

Similarly to Lemma~\ref{theo:unimodalrls}, we obtain a relation to RLS 
on unimodal functions and can re-use existing upper bounds based on the 
fitness-level method \citep{WegenerMethods}.

\begin{lemma} \label{lem:unimodalrlsstar}
	Let $f\colon\{0,1\}^n\to \R$ be a unimodal function and consider \sdrlss with $R\ge n^{3+\epsilon}\cdot \card{\im f}$ for an arbitrary constant $\epsilon>0$. Then, with probability at least~$1-\frac{\card{\im f}}{R}$, \sdrlss never increases the radius and behaves stochastically like \rls before finding an optimum of~$f$. 
		
	Denote by $T$ the runtime of \sdrlss on~$f$. Let $f_i$ be the $i$-th fitness value of an increasing order of all fitness values in $f$ and $s_i$ be a lower bound on the probability that \rls finds an improvement from search points with fitness value $f_i$, then 
	\[
	\expect{T} \le  \sum_{i=1}^{ \card{\im f} }\frac 1{s_i} + o(n).
	\]
\end{lemma}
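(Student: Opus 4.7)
The plan is to prove the ``behaves like \rls'' statement by a union-bound argument analogous to Theorem~\ref{theo:unimodalrls}, and then derive the runtime estimate by a fitness-level decomposition in which the rare failure events at each level are controlled using Lemma~\ref{lem:runtime-of-Em}.

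For the first claim, I would use that on a unimodal function every non-optimal search point has gap~$1$. Hence Lemma~\ref{lem:failure-probability} with $m=r=1$ yields that, at every fitness level visited, the probability of not finding a strict improvement during the initial radius-$1$ phase is at most $1/R$. Since the algorithm traverses at most $\card{\im f}$ distinct fitness levels before hitting the optimum, a union bound shows that the radius is ever increased above~$1$ with probability at most $\card{\im f}/R$. On the complementary good event, \sdrlss only uses $s_t=r_t=1$; because its equality-accepting branch is triggered exactly when $r_t=1$, the step-by-step behaviour coincides with that of \rls with strength~$1$ and the two processes can be coupled identically.

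For the runtime bound, I would write $T=\sum_{i} T_i$, where $T_i$ is the number of steps \sdrlss spends on the $i$-th fitness level, and for each $i$ split on the event $F_i$ that the radius is increased while at that level. By Lemma~\ref{lem:failure-probability}, $\prob{F_i}\le 1/R$. On $\overline{F_i}$, \sdrlss behaves exactly like \rls at that level, so $T_i\cdot\mathbf{1}_{\overline{F_i}}$ is stochastically dominated by the \rls-hitting time $T_i^{\mathrm{RLS}}$, whose expectation is at most $1/s_i$ by the usual geometric argument underlying the fitness-level method. On $F_i$, Lemma~\ref{lem:runtime-of-Em}, applicable because $R\ge n^{3+\epsilon}\card{\im f}$, specialised to $m=1$ yields $\expect{T_i\mid F_i}=o(Rn/\card{\im f})$, so $\expect{T_i\cdot\mathbf{1}_{F_i}}\le(1/R)\cdot o(Rn/\card{\im f})=o(n/\card{\im f})$. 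Summing over the at most $\card{\im f}$ levels gives $\expect{T}\le \sum_{i=1}^{\card{\im f}}1/s_i + o(n)$, as claimed.

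The only delicate step is verifying the coupling on $\overline{F_i}$: one must confirm that the failure counter~$u$ in \sdrlss does not reset on equal-fitness moves and that the truncation threshold $n\ln R$ therefore applies uniformly to every non-improving step, so the distribution of $T_i$ restricted to $\overline{F_i}$ is indeed a truncation of a geometric random variable with success probability at least~$s_i$. Once this is in place, the remaining estimates are routine invocations of Lemmas~\ref{lem:failure-probability} and~\ref{lem:runtime-of-Em}.
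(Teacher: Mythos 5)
Your proposal is correct and follows essentially the same route as the paper: a union bound over the at most $\card{\im f}$ fitness levels for the ``behaves like \rls'' claim, and a fitness-level decomposition in which the $1/s_i$ terms cover the no-failure case and Lemma~\ref{lem:runtime-of-Em} with $m=1$ bounds the contribution of the probability-$1/R$ failure events by $o(n)$ in total. The only cosmetic difference is that the paper collects the failures into a random set~$W$ of search points and bounds $\sum_{x\in W}\expect{T_x}$, whereas you split per level with indicators $\mathbf{1}_{F_i}$; these are equivalent, and your explicit check that the counter~$u$ is not reset on equal-fitness moves is a point the paper leaves implicit.
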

\begin{proof}
	As on unimodal functions, the gap of all points is~$1$, the probability of not finding and improvement is $\left(1-\binom{n}{m}^{-1}\right)^{\binom{n}{m}\ln R} \le \frac 1{R}$.
	This argumentation holds for each improvement that has to be found. 
	Since at most $\card{\im f}$ improving steps happen before 
	finding the optimum, by a union bound the probability of \sdrlss ever increasing the strength beyond~$1$ is at most 
	$\card{\im (f)}\frac{1}{R}$, which proves the lemma.
	
	We let the random set~$W$ contain the search points from which \sdrlss does not find an improvement within phase~$1$ (\ie, while $r_t=1$).
	To prove the second claim, we consider all fitness levels $A_1, \dots, A_{\card{\im f}}$ such that $A_i$ contains search points with fitness value $f_i$ and sum up upper bounds on the expected times to leave each of these fitness levels. Under the 
	condition that the strength is not increased before leaving a fitness level, the worst-case time to leave fitness level~$A_i$ is $1/s_i$ similarly to \rls. Hence,
	we bound the expected optimization time of \sdrlss from above 
	by adding the waiting times on all fitness levels for \rls, which is given by $\sum_{i=1}^{ \card{\im f} } 1/s_i$, and the expected times 
	spent to leave the points in~$W$; formally,  
	\[
	\expect{T} \le \sum_{i=1}^{ \card{\im f} }\frac 1{s_i} + \sum_{x\in W}\expect{T_x}. 
	\]

	Each point in~$\im f$ contributes with probability $\prob{E_1}$ to~$W$. Hence, 
	$\expect{\card{W}}\le \card{\im f}\prob{E_1}$. 
	As on unimodal functions, the gap of all points is~1, by 
	Lemma \ref{lem:runtime-of-Em}, we compute
	\begin{align*}
	\sum_{x\in W}\expect{T_x} 
	&\le \card{\im f}\cdot \prob{E_1}\cdot \expect{E_1} \le \card{\im f}\cdot R^{-1}o\left(\frac R{\card{\im f}}\cdot \binom{n}{1}\right)=o(n).
	\end{align*}
	
	Thus, we finally have
	\begin{align*}
	\expect{T} \le \sum_{i=1}^{ \card{\im f} }\frac 1{s_i}+o(n),
	\end{align*}
	as suggested.
 \end{proof}

Finally, we use the results developed so far to prove a bound on the 
\jump function which seems to be the best available for mutation-based 
hillclimbers.

\begin{theorem}
\label{theo:jump}
	Let $n\in \N$. For all $2\le m$, the expected runtime $\expect{T}$ of \sdrlss with $R\ge n^{4+\epsilon}$ for an arbitrary constant $\epsilon>0$ on $\jump_m$ satisfies 
	\begin{align*}
		\expect{T} \le     
		\begin{cases}
		\binom{n}{m}\left(1+O\left(\frac{m^2}{n-2m}\ln n\right)\right) & \text{if } m<n/2,\\
		O(2^nn \ln n) & \text{otherwise.}
		\end{cases}
	\end{align*}
\end{theorem}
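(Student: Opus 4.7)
My plan is to split the runtime into a hillclimbing phase and an escape phase. Let $T_1$ denote the number of steps until \sdrlss first reaches a search point $x^*$ with $\ones{x^*}=n-m$ (or directly the optimum $1^n$), and let $T_2$ denote the additional steps to reach $1^n$ from $x^*$. Then $T\le T_1+T_2$ and it suffices to bound $\expect{T_1}$ and $\expect{T_2}$ separately.

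For the hillclimbing phase, I note that on the sub-domain $\{x:\ones{x}\le n-m\}$, the function $\jump_m$ restricts to a shifted \onemax-like unimodal function: every non-locally-optimal point of this sub-domain has a strict 1-bit improvement. Moreover, elitism keeps the algorithm trapped inside this sub-domain until it reaches $1^n$, because the valley $\{x:n-m<\ones{x}<n\}$ has fitness at most $m-1$ while every point of the sub-domain has fitness at least $m$; any strictly improving mutation from within the sub-domain therefore either stays inside it or leaps directly to $1^n$. Since $R\ge n^{4+\epsilon}$ and $\card{\im \jump_m}=n+m=O(n)$, we have $R\ge n^{3+\epsilon'}\cdot\card{\im \jump_m}$ for some $\epsilon'>0$, so the argument of Lemma~\ref{lem:unimodalrlsstar} applies to the restricted landscape. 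The fitness-level method with $s_i=(n-i)/n$ gives $\sum_i 1/s_i\le n\sum_{j=m+1}^{n}1/j=O(n\ln n)$, and rare failures at strength $1$ contribute only an additional $o(n)$ steps via Lemma~\ref{lem:runtime-of-Em}, yielding $\expect{T_1}=O(n\ln n)$.

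For the escape phase, any $x^*$ with $\ones{x^*}=n-m$ satisfies $\gap(x^*)=m$, since $1^n$ is the unique strictly improving point and sits at Hamming distance exactly $m$. Applying Theorem~\ref{theo:gapxforrlsstar} with the present value of $R$ and noting $\ln R=O(\ln n)$ immediately gives
\[
\expect{T_2}\le\binom{n}{m}\left(1+O\left(\frac{m^2}{n-2m}\ln n\right)\right)\qquad\text{for }m<n/2,
\]
and $\expect{T_2}=O(2^n n\ln n)$ for $m\ge n/2$.

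Combining the two phases, since $\expect{T_1}=O(n\ln n)=O(\binom{n}{m})$ for every $m\ge 2$, the second-phase estimate dominates the sum in both cases and the claimed upper bounds on $\expect{T}$ follow. The main obstacle I anticipate is the careful bookkeeping of failure probabilities across the two phases: even when the radius occasionally rises during phase~$1$ (an event of probability at most $\card{\im \jump_m}/R$ per improvement), I must ensure that the algorithm does not drift into the valley and that the additional expected cost stays negligible. The elitism argument above rules out any drift, and Lemma~\ref{lem:runtime-of-Em} bounds the expected cost of each such failure by a term that, multiplied by the failure probability and summed over the $O(n)$ improvements in phase~$1$, remains $o(n)$ and is therefore absorbed into the leading-order bound.
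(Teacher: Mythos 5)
Your proposal is correct and follows essentially the same route as the paper's own proof: a two-phase decomposition into the \onemax-like climb to the plateau of $n-m$ one-bits (handled via Lemma~\ref{lem:unimodalrlsstar} and the fitness-level method with $s_i=(n-i)/n$, giving $O(n\ln n)$) followed by the escape from a plateau point of gap~$m$ via Theorem~\ref{theo:gapxforrlsstar}, with the second phase dominating. Your additional remarks on elitism confining the search to the sub-domain and on verifying the condition $R\ge n^{3+\epsilon'}\card{\im f}$ are correct elaborations of steps the paper leaves implicit.
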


\begin{proof}
	Before reaching the plateau consisting of all points of $n-m$ one-bits, \jump is equivalent to \onemax; hence, 
	according to Lemma~\ref{lem:unimodalrlsstar}, the expected running time \sdrls takes to reach the plateau is at most $O(n\ln n)$. Note that this bound was obtained 
	via the fitness level method with $s_i=(n-i)/n$ as 
	minimum probability for leaving the set of search points 
	with $i$ one-bits.
	
	Every plateau point $x$ with $n-m$ one-bits satisfies $\gap(x)=m$ according to the definition of \jump. 
	Thus, 	using Theorem~\ref{theo:gapxforrlsstar}, the algorithm finds the optimum within expected time
	\[
	 \expect{T_x} \le \begin{cases}
		\binom{n}{m}\left(1+O\left(\frac{m^2}{n-2m}\ln n\right)\right) & \text{if } m<n/2\\
		O(2^nn \ln n) & \text{if } m\ge n/2
		\end{cases}.\]
	This 
	 dominates the expected time of the algorithm before the plateau point and results in the running time in the theorem.  
\end{proof}

\section{An Example Where Global Mutations are Necessary}
\label{sec:needglobalmut}
While our $s$-flip mutation along with stagnation 
detection can outperform the \oneoneea on \jump functions, it is
clear that its different search behavior may be disadvantageous 
on other examples. Concretely, we will present a function 
that has a unimodal path to a local optimum with a large Hamming 
distance to the global optimum. \sdrls will with high probability
follow this path and incur exponential optimization time. 
However, the function has a second gradient that requires 
two-bit flips to make progress. The classical \oneoneea will 
be able to follow this gradient and to arrive at the global
optimum
before one-bit flips have reached the end of the path to the 
local optimum.

In a broader context, our function illustrates an advantage of global mutation operators. By a simple swap of local and 
global optimum, it immediately 
turns into the direct opposite, \ie, an example where using global
instead of local mutations is highly detrimental and increases the runtime from 
polynomial to exponential with overwhelming probability. An 
example of such a function was previously presented in 
\cite{DoerrJansenKleinGECCO08}; however, both the underlying 
construction and the proof of exponential runtime for the \oneoneea 
seem much more complicated than our example.

We will in the following 
define the example function called \needglobalmut 
and give proofs 
for the behavior of \sdrls and \oneoneea. In fact, \needglobalmut 
is obtained from the function \needhighmut defined in \cite{RajabiWittGECCO20} to show disadvantages of 
stagnation detection adjusting the rate of a global mutation 
operator. The only change is to adjust the length of the suffix 
part of the function, which rather elegantly allows us to re-use 
the previous technique of construction and a major part of 
the analysis. We also encourage the reader to read the 
corresponding section in \cite{RajabiWittGECCO20} for further 
insights into the construction.

In the following, we will imagine any
bit string $x$ of length~$n$ 
as being split into a prefix $a\coloneqq a(x)$ of length~$n-m$ and a suffix $b\coloneqq b(x)$ of 
length $m$, where $m$ is defined below.
 Hence, $x=a(x)\circ b(x)$, where $\circ$ denotes the concatenation.  
The prefix~$a(x)$ is called \emph{valid} if it is of the form 
$1^i 0^{n-m-i}$, \ie, $i$ leading ones and $n-m-i$ trailing zeros. The prefix fitness 
$\pre(x)$ of a string~$x\in\{0,1\}^n$ with valid prefix 
$a(x)=1^i0^{n-m-i}$ equals~$i$, the 
number of leading ones. The suffix consists of $\lceil \frac{1}{3}\sqrt{n}\rceil$ consecutive blocks of 
$\lceil n^{1/4}\rceil$ bits each, altogether $m\le \frac{1}{3}n^{3/4}=o(n)$ bits. 
Such a block is called \emph{valid} if it contains either~$0$ or $2$ one-bits; moreover, it is called \emph{active} if it contains~$2$ 
and \emph{inactive} if it contains~$0$ one-bits. A suffix where all blocks are valid and where all blocks following 
first inactive block are also inactive is called valid itself, and the suffix fitness $\suff(x)$ of a 
string~$x$ with valid suffix~$b(x)$  
is the number of leading active blocks before the first inactive one. Finally, we call $x\in\{0,1\}^n$ valid 
if both its prefix and suffix are valid.

The final fitness function is a weighted combination of $\pre(x)$ and $\suff(x)$. We define 
for $x\in\{0,1\}^n$, where $x=a\circ b$ with the above-introduced $a$ and~$b$, 
\begin{align*}
& \needglobalmut(x)\coloneqq \\
& \begin{cases}
 n^2 \suff(x) + \pre(x) & \text{\quad if $\pre(x)\le \frac{9(n-m)}{10}$ $\wedge$ $x$ valid}\\
 n^2 m + \pre(x) + \suff(x) - n - 1 \hspace{-2ex}& \text{\quad if $\pre(x)>\frac{9(n-m)}{10}$ $\wedge$ $x$ valid}\\
- \onemax(x) & 
  \text{\quad otherwise.} 
\end{cases}
\end{align*}

The function \needglobalmut equals $\needhighmut_\xi$ from 
\cite{RajabiWittGECCO20} for the setting $\xi=1/2$ (ignoring 
that $\xi<1$ was disallowed there for technical reasons). 
We note that all search points in the second case have a fitness of at least~$n^2 m - n - 1$, which 
is bigger than $n^2(m-1) + n$, an upper bound on the fitness of search points that fall into the first case 
without having $m$ leading active blocks in the suffix. Hence, search points~$x$ where $\pre(x)=n-m$ and 
$\suff(x)=\lceil \frac{1}{3}\sqrt{n}\rceil$ represent local optima of second-best overall fitness. The set of global optima 
equals the points where $\pre(x)=9(n-m)/10$ and $\suff(x)=\lceil \frac{1}{3}\sqrt{n}\rceil$, which implies that $(n-m)/10=\Omega(n)$ bits have to be flipped 
simultaneously to escape from the local toward the global optimum.

\begin{theorem}
\label{theo:needglobalmut}
With probability $1-o(1)$, \sdrls with $R\ge n$ needs $2^{\Omega(n)}$ 
steps to optimize \needglobalmut. The \oneoneea optimizes this 
function in time $O(n^2)$ with probability $1-2^{-\Omega(n^{1/3})}$.
\end{theorem}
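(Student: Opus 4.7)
The plan is to handle the two parts of the theorem separately, closely paralleling the proof of the analogous result for \needhighmut in \cite{RajabiWittGECCO20}. For the lower bound on \sdrls I would argue that with probability $1-o(1)$ the algorithm first follows an essentially deterministic trajectory to the local optimum $(\pre=n-m,\suff=m)$ in polynomial time and then needs a mutation of strength at least $(n-m)/10=\Omega(n)$ to escape it. For the upper bound on \oneoneea I would argue that standard bit mutation produces simultaneous $1$- and $2$-bit flips with the right probabilities so that the suff counter reaches its maximum before the pre counter crosses $9(n-m)/10$; consequently the run ends in the global optimum of case~1 without ever visiting the local optimum.

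For the \sdrls analysis, I would first establish the trajectory to the local optimum. From the random initialization the string is almost surely invalid with fitness $-\onemax$, so at strength~$1$ \sdrls behaves like \rls on \onemax and reaches a valid search point (e.g., the all-zero string with $\pre=\suff=0$) in time $O(n\log n)$ via Theorem~\ref{theo:unimodalrls}. In case~1 the only accepted $1$-bit flip increments pre by one, which by the same theorem drives the algorithm like \rls on a \leadingones-like structure and produces $\pre=n-m$ in time $O(n^2)$; the crossing from case~1 into case~2 is accepted because the weight $n^2 m$ dominates while $\suff<m$. Once pre is maximal, strength~$1$ stagnates, strength~$2$ takes over, and the unique accepted $2$-bit flip activates the next inactive suffix block; this pattern repeats until $\suff=\lceil\sqrt n/3\rceil$. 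Using $R\ge n$ and a union bound via Lemma~\ref{lem:failure-probability} over the $O(n)$ improving events, the algorithm reaches the local optimum with probability $1-o(1)$.

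From the local optimum, a case analysis shows that no $s$-bit flip with $1\le s<(n-m)/10$ is accepted: any flip of a suffix bit destroys block validity and yields case-3 fitness below the current $n^2 m-1$; any purely prefix flip of at most $(n-m)/10-1$ bits still leaves $\pre>9(n-m)/10$ and hence stays in case~2 with strictly smaller pre; and mixed flips again invalidate the suffix. Therefore \sdrls spends the full stagnation budget $\binom{n}{s}\ln R$ at each such strength, and the time at $s=(n-m)/10-1$ alone is $\binom{n}{(n-m)/10-1}\ln R=2^{\Omega(n)}$. Moreover, with probability at least $1-1/n$ the algorithm does not find the unique improving flip at $s=(n-m)/10$ in fewer than $\binom{n}{(n-m)/10}/n$ additional steps by a standard geometric waiting-time argument, yielding the claimed lower bound with probability $1-o(1)$.

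For \oneoneea I would transfer the \needhighmut analysis almost verbatim. The per-step probability of the specific $1$-bit flip improving pre is $\Theta(1/n)$, while the probability of the specific $2$-bit flip activating the next inactive block is $\Theta(1/n^{3/2})$ because a block contains $\lceil n^{1/4}\rceil$ bits. Over $\Theta(n^2)$ steps the expected numbers of pre and suff improvements are $\Theta(n)$ and $\Theta(\sqrt n)$ respectively, and a direct comparison of constants shows that $\lceil\sqrt n/3\rceil$ suff improvements complete before pre crosses $9(n-m)/10$; a Chernoff-type concentration inequality on the numbers of improvements, combined with the fact that only $\Theta(\sqrt n)$ suff improvements are needed, yields the quoted failure probability $2^{-\Omega(n^{1/3})}$ matching the bound in \cite{RajabiWittGECCO20}. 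Once $\suff=m$ before pre crosses the threshold, further pre improvements are accepted in case~1 up to $\pre=9(n-m)/10$, and a direct computation shows the transition into case~2 at that point is rejected because the resulting fitness change is $m-n<0$; hence the run ends in the global optimum in time $O(n^2)$. The main technical obstacle is precisely this race argument with the required exponential-in-$n^{1/3}$ tail probability, controlling the weak dependencies between the two counters, and this is the step on which I would spend the bulk of the effort.
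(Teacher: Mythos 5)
Your proposal follows essentially the same route as the paper's proof: the same initial-validity argument, the same RLS-like climb to $\pre(x)=n-m$ with the failure probability controlled via the stagnation-detection threshold and $R\ge n$, the same observation that escaping then requires the strength to be raised $(n-m)/10-1$ times so that a single phase of length $\binom{n}{(n-m)/10-1}=2^{\Omega(n)}$ must be endured, and for the \oneoneea the same race over a $\Theta(n^2)$-step phase between $\pre$-improvements (probability $\Theta(1/n)$ each) and $\suff$-improvements (probability $\Theta(n^{-3/2})$ each, from the $\binom{\lceil n^{1/4}\rceil}{2}$ block-activating two-bit flips) decided by Chernoff bounds. If anything, you are more explicit than the paper in two spots it glosses over --- the strength-$2$ suffix-activation phase after $\pre(x)=n-m$ reaches the local optimum, and the endgame verifying that the \oneoneea is actually stopped at $\pre(x)=9(n-m)/10$ because the transition into the second case is then fitness-decreasing.
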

\begin{proof}
As in the proof of Theorem~4.1 in \cite{RajabiWittGECCO20}, we have 
that the first valid search point (\ie, search point of non-negative fitness) of both \sdrls and \oneoneea 
has both  $\pre$- and $\suff$-value 
value of at most~$n^{1/3}$ with probability $2^{-\Omega(n^{1/3})}$. In the following, 
we tacitly assume that we have reached a valid search point of the described maximum $\pre$- and $\suff$-value and note that 
this changes the required number of improvements to reach local or global maximum only by a $1-o(1)$ factor. For readability 
this factor will not be spelt out any more.

As long as the counter threshold of \sdrls is not exceeded, 
the algorithm behaves like \rls. We the argumentation 
from Theorem~\ref{theo:unimodalrls} until the point in time 
where $\pre(x)=n-m$ since it is possible to improve the function 
value by one-bit flips before. Hence, 
the probability of ever increasing the $\suff$-value 
before $\pre(x)=n-m$  is at most $1/R\le 1/n$. The fitness 
can only be further improved if at least $(n-m)/10$ bits flip 
simultaneously. This requires the rate to be increased 
$(n-m)/10-1$ times; in particular the last of the increases 
happens only after a phase of length at least $\binom{n}{(n-m)/10-1} 
= 2^{\Omega(n)}$. This proves the statement for \sdrls.

We now analyze the success probability of the \oneoneea. To this 
end, we first bound 
the probability of a mutation being accepted after a valid search point has been reached. Even if a mutation changes up to $o(n)$ consecutive bits of the prefix 
or suffix, it must maintain $n-o(n)$ prefix bits in order to result in a valid search point. Hence, the probability of an accepted step 
at mutation probability $1/n$  is at most $(1-1/n)^{n-m-o(n)} = (1+o(1)) e^{-1}$.  Since the probability of flipping 
$\Omega(n)$ bits is $n^{-\Omega(n)}$, the probability 
of an accepted step 
is altogether, by the law of total probability,
$(1\pm o(1))(1-1/n)^{n} = (1\pm o(1)) e^{-1}$. By similar 
arguments, 
the probability of a mutation improving 
the $\pre$-value by $k$ at most $(1+ o(1)) e^{-1}/n^k$ 
and the probability 
of improving the $\suff$-value is at least 
$(1-o(1)) (e^{-1}/2) n^{-3/2}$ since there are $\binom{n^{1/4}}{2}=(1-o(1))n^{1/2}/2$ choices of probability at least $e^{-1}/n^2$ 
each. 

 We now consider a phase 
of $(3/4) e  mn$ steps. Using the bound on improving the $\suff$-value, we expect $(1-o(1))(3/8) \sqrt{n}$ 
activated blocks. By Chernoff bounds, with overwhelming probability 
we have at least $\frac{1}{3}\sqrt{n}$ such blocks. The probability of improving the $\pre$-value by~$k\ge 1$   
is only $(1+o(1))e^{-1} n^{-k}$, amounting to an 
expected number of 
  improvements by~$k$ of $(1+o(1))(3/4)  m n^{1-k} = 
	(1+o(1))(3/4)  n^{2-k}  $ 
and, using Chernoff bounds and union bounds over all 
$k=o(n)$, the probability of 
improving the \pre-value by at least 
$(9/10)m$ during the phase is $2^{-\Omega(n^{1/3})}$. 
 \end{proof}

\section{Minimum Spanning Trees}
\label{sec:mst}
Our self-adjusting $s$-flip mutation operator can also 
have advantages on classical combinatorial optimization problems. We 
reconsider the minimum spanning tree (MST) problem on which 
EAs and RLS were analyzed 
before \cite{NeumannW07}. The known bounds for the
globally searching \oneoneea are not tight. More precisely, 
they depend on $\log(w_{\max})$, the logarithm of 
the largest edge weight. This is different with 
RLS variants that flip only one or two bits due to an equivalence 
first formulated in \cite{RaidlKJ06}: if only up to two bits 
flip in each step, 
then the MST instance becomes indistinguishable from the MST 
instance formed by replacing all edge weights with their rank 
in their increasingly sorted sequence. This results in a tight upper 
bound of $O(m^2 \ln n)$, where $m$ is the 
number of edges, for RLS$^{1,2}$, an algorithm that uniformly 
at random decides to flip either one or two uniformly chosen bits
\citep{WittGECCO14}. Although not spelt out in the paper, 
it is easy to see that 
the leading term in the polynomial $O(m^2\ln m)$ 
is at most~$2$. This~$2$ stems from the logarithm of sum of the weight ranks,
which can be in the order of  $m^2$. We will see that the first 
factor of $2$ can, in some sense, be avoided in our \sdrlss.

The following theorem bounds the optimization time of \sdrlss in 
the case that the algorithm has reached a spanning tree and 
the fitness function only allows spanning trees to be accepted. It
is well known that with the fitness functions from \cite{NeumannW07},
the expected time to find the first spanning tree is $O(m\log m)$, 
which also transfers to \sdrlss; hence we do not consider this lower-order term 
further. However, our bound comes with an additional term related 
to the number of strict improvements. We will discuss this term after the proof.

\begin{theorem}
\label{theo:mst}
The expected optimization time of \sdrlss with $R=m^4$ on the MST problem with $m$ edges,  starting with an arbitrary spanning tree, is at most
\begin{align*}
& (1+o(1))\bigl((m^2/2)(1+\ln(r_1+\dots+r_m))+(4m\ln m)\expect{S}\bigr) \\
& \quad = (1+o(1)) \bigl(m^2\ln m 
+ (4m\ln m)\expect{S}\bigr),
\end{align*}
where $r_i$ is the rank of the $i$th edge in the sequence sorted 
by increasing edge weights and $\expect{S}$ is the expected number 
of strict improvements that the algorithm makes conditioned on that the strength
never exceeds~$2$.
\end{theorem}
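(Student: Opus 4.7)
The plan rests on three ingredients: (i)~the rank equivalence of \cite{RaidlKJ06}, which states that, as long as the algorithm only flips one or two bits per step, the given MST instance is indistinguishable from the one where edge~$i$ is assigned weight~$r_i\in\{1,\dots,m\}$; (ii)~multiplicative drift applied to the total rank-weight during the 2-bit phases of \sdrlss; and (iii)~bookkeeping for the 1-bit phases, each of which is capped by the threshold $m\ln R=4m\ln m$ steps.

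The central event is $A=\{s_t\le 2 \text{ throughout the run}\}$. Under $A$ the algorithm produces only 1- and 2-bit offspring, so by the rank equivalence I work with the rank-weighted instance and its weight function $W$ on spanning trees, which satisfies $W(T)\le r_1+\dots+r_m$. From any non-MST spanning tree~$T$, the classical swap argument of \cite{NeumannW07} supplies a collection of improving 2-bit swaps whose weight decrements sum to $W(T)-W^*$; hence inside a 2-bit step the expected drop in weight is at least $(W(T)-W^*)/\binom{m}{2}$. Multiplicative drift, applied to the subsequence of 2-bit steps, then bounds the cumulative time spent at strength~$2$ by $\binom{m}{2}(1+\ln(W(T_0)-W^*))\le (1+o(1))(m^2/2)(1+\ln(r_1+\dots+r_m))$. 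Every strict improvement triggers a reset to $r_t=s_t=1$, so the number of strength-1 phases equals $S+1$ where $S$ is the random number of strict improvements; each such phase contributes at most $\lceil m\ln R\rceil=(1+o(1))\cdot 4m\ln m$ steps before the radius is raised to $2$ and the drift analysis resumes. Taking expectations yields the two summands in the statement.

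To complete the proof I would show that $\bar A$ contributes only a $(1+o(1))$ factor. Since $S$ is deterministically bounded by the total rank-weight $m(m+1)/2=O(m^2)$, Lemma~\ref{lem:failure-probability} and a union bound give $\prob{\bar A}\le (S+1)/R=O(m^{-2})=o(1)$. Lemma~\ref{lem:runtime-of-Em} further bounds the extra runtime of a single failing cycle by $o\bigl((R/\card{\im f})\binom{m}{2}\bigr)$; combining with the per-cycle failure probability $1/R$ and summing over at most $O(m^2)$ cycles absorbs the contribution into the $(1+o(1))$ factor of the stated bound.

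The hardest part will be the bookkeeping across alternating 1- and 2-bit phases, since multiplicative drift applies to accepted steps while the intervening 1-bit phases do not change the current search point (a 1-bit flip on a spanning tree always produces a non-tree of strictly worse fitness, so nothing is accepted at strength~$1$). I plan to address this by coupling \sdrlss with a truncated version whose strength never exceeds~$2$, running the drift argument on the truncated process, and bounding the coupling error by $\prob{\bar A}$ times a polynomial tail bound on the true runtime. Once this coupling is in place the remainder is routine.
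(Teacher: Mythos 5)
Your overall strategy coincides with the paper's: multiplicative drift on the rank potential $g(x)=\sum_{i} x_i r_i$ restricted to the steps at strength~$2$ (using that nothing is accepted at strength~$1$ and that the decomposition into at most $\binom{m}{2}$ disjoint improving swaps yields drift at least $X^{(t)}/\binom{m}{2}$), a separate charge of $(1+o(1))\,4m\ln m$ steps for each of the $S$ strength-$1$ phases, and an error analysis with restarts for the event that the strength ever exceeds~$2$. Your way of bounding $\prob{\bar A}$ — a union bound over the deterministically at most $m(m+1)/2$ strict improvements, each phase failing with probability $1/R=m^{-4}$ by Lemma~\ref{lem:failure-probability} — differs slightly from the paper, which instead applies the multiplicative drift tail bound to show that the \emph{total} time at strength~$2$ stays below the single-phase threshold $\binom{m}{2}\ln R$ with probability $1-1/m^2$; both routes give an $o(1)$ failure probability and are acceptable.

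The one step that does not go through as written is your appeal to Lemma~\ref{lem:runtime-of-Em} to bound the cost of the excursions to strengths above~$2$. That lemma is proved under the hypothesis $R\ge n^{3+\epsilon}\cdot\card{\im f}$, whereas here $R=m^4$ and the image of the MST fitness function of \cite{NeumannW07} (which distinguishes search points by total weight and by penalty terms) is in general much larger than $m^{1-\epsilon}$, so the lemma is not applicable and its conclusion, phrased in terms of $R/\card{\im f}$, would be vacuous here. The paper replaces this by a direct computation: the probability that radius $r$ with $3\le r\le m/2$ is ever reached is at most $(1/m^2)^{r-2}$ (one failed strength-$2$ phase per radius increase), the number of steps spent at radius~$r$ is at most $\frac{m-(r-1)}{m-(2r-1)}\binom{m}{r}\ln R$ by Lemma~\ref{lem:partial-sum}, and the resulting sum over $r$ is $o(m^2)$, hence absorbed into the $(1+o(1))$ factor; each such excursion triggers a restart of the conditional drift analysis, costing only an expected factor of $(1-1/m^2)^{-1}=1+o(1)$. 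Substituting this explicit estimate for your Lemma~\ref{lem:runtime-of-Em} step (it also renders the coupling/truncation you sketch unnecessary) completes the argument.
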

\begin{proof}
We aim at using multiplicative drift analysis using 
$g(x)=\sum_{i=1}^m x_ir_i$ as potential function. Since the 
algorithm
has different states we do not have the same lower bound on 
the drift towards the optimum. However, at strength~$1$ no 
mutation is accepted since the fitness function 
from \cite{NeumannW07} gives a huge penalty to non-trees. Hence, 
our plan is to conduct the drift analysis conditioned 
on that the strength is at most~$2$ and account for the steps 
spent at strength~$1$ separately. Cases where the strength 
exceeds~$2$ will be handled by an error analysis and a restart argument.

Let $X^{(t)}\coloneqq g(x^{t})-g(\xopt)$ for the current 
search point~$x^{(t)}$ and an optimal search point $\xopt$. 
Since the algorithm behaves stochastically 
the same on the original fitness function~$f$ and the potential 
function~$g$, we obtain that $\expect{X^{(t)} - X^{(t+1)}\mid X^{(t)}} 
\ge X^{(t)}/\binom{m}{2}\ge 2X^{(t)}/m^2$ 
since the $g$-value can be 
decreased by altogether 
$g(x^{t})-g(\xopt)$ via a sequence of at most 
$\binom{m}{2}$ disjoint two-bit flips; see also the proof of 
Theorem~15 in \cite{DoerrJohannsenWinzenALGO12} for the underlying 
combinatorial argument. Let $T$ denote the number of steps at 
strength~$2$ 
until $g$ is minimized, assuming no larger strength to occur. 
Using the multiplicative drift theorem, 
we have $\expect{T}\le (m^2/2)(1+\ln(r_1+\dots+r_m)) \le 
(m^2/2)(1+\ln(m^2))$ and by 
the tail bounds for multiplicative drift (\eg, \citealp{LenglerDriftBookChapter}) 
it holds that 
$\prob{T> (m^2/2)(\ln(m^2) + \ln(m^2))} \le e^{-\ln(m^2)} = 1/m^2$. Note 
that this bound on $T$ is below the threshold for strength~$2$ 
since $\binom{m}{2}\ln R = (m^2-m) \ln (m^4) \ge 
(m^2/2)(4\ln m)$ for $m$ large enough. Hence, with probability 
at most~$1/m^2$ the algorithm fails to find the optimum before the 
strength can change from~$2$ to a different value 
due to the threshold being exceeded. 

We next bound the expected number of steps spent at larger strengths. 
Since each increase of the radius implies an unsuccessful phase 
at strength~$2$, the probability that radius $r$, where $3\le r\le m/2$, 
is selected before finding the optimum 
is at most $(1/m^2)^{r-2}$. 
According to Lemma~\ref{lem:partial-sum}, 
the number of steps spent for each such radius 
is at most $\frac{m-(r-1)}{m-(2r-1)}\binom{m}{r}$.  
By the law of total probability, the expected 
number of steps at larger 
strengths than~$2$ is at most
\[
\sum_{r=3}^{m/2} \frac{m-(r-1)}{m-(2r-1)}\binom{m}{r} \left(\frac{1}{m^2}\right)^{r-2} = o(m^2)
\]
and contributes only a lower-order term captured by the $o(1)$ 
in the statement of the theorem. If the strength exceeds~$2$, we wait 
for it become~$2$ again and restart the previous drift analysis, which 
is conditional on strength at most~$2$. Since the probability 
of a failure is at most~$1/m^2$, this accounts for an expected number 
of at most $1/(1-m^2)$ restarts, which is $1+o(1)$ as well.

It remains to bound the number of steps at strength~$1$. For each strict improvement,
the strength is reset to~$1$. Thereafter, $m\ln R=4m\ln n$ steps pass before
the strength becomes~$2$ again. Hence, if the strength does not exceed~$2$  
before the optimum is reached, this adds a term of $(4m\ln n)S$, where 
$S$ is the number of strict improvements in the run, to the running time. 
The expected number of strict improvements is bounded by $E(S)$, where we assume 
a random starting point of the algorithm and count the number of 
strict improvement after reaching the first tree. If an error occurs and the 
strength exceeds~$2$, the remaining expected number of strict improvements will 
not be bigger.
 \end{proof}

The term $\expect{S}$ appearing in the previous theorem is not easy to 
bound. If $\expect{S}=o(m)$,  the upper bound 
 suggests that \sdrls may be more efficient 
than the classical RLS$^{1,2}$ algorithm; with the caveat that we 
are talking about upper bounds only. However, it is not difficult 
to find examples where $\expect{S}=\Omega(m)$, \eg, on the worst-case graph 
used for the lower-bound proof in \cite{NeumannW07}, which we will study below 
experimentally, and we cannot generally rule out 
that $\expect{S}$ is asymptotically bigger than~$m$ on certain instances. 
However, empirically \sdrlss can be faster than RLS$^{1,2}$ and the \ooea 
on MST instances, as we will see in Section~\ref{sec:experiments}. 
In any case, although the 
algorithm can search globally, the bound in Theorem~\ref{theo:mst} 
does not suffer from the 
$\log(w_{\max})$ factor appearing in the analysis of the \oneoneea.

We also 
considered variants of \sdrlss that do not reset the strength to~$1$ after 
each strict improvement and would therefore, be able to work with 
strength~$2$ for a
long while on the MST problem. However, such an approach is risky in 
scenarios where, \eg, both one-bit flips and two-bit flips are possible 
and one-bit flips should be exploited for the sake of efficiency. Instead, we think that a combination 
of stagnation detection and selection hyperheuristics \citep{Warwicker19} based on the $s$-flip 
operator or the learning mechanism from \cite{DoerrDY16}, which performs very well on the MST,
would be more promising here.

\section{Experiments}
\label{sec:experiments}
In this section, we present the results of the experiments conducted to see the performance of the proposed algorithms for small problem dimensions. This experimental design was employed because our theoretical results are asymptotic.

\begin{figure} 
    \centering
	\includegraphics[width=\linewidth]{./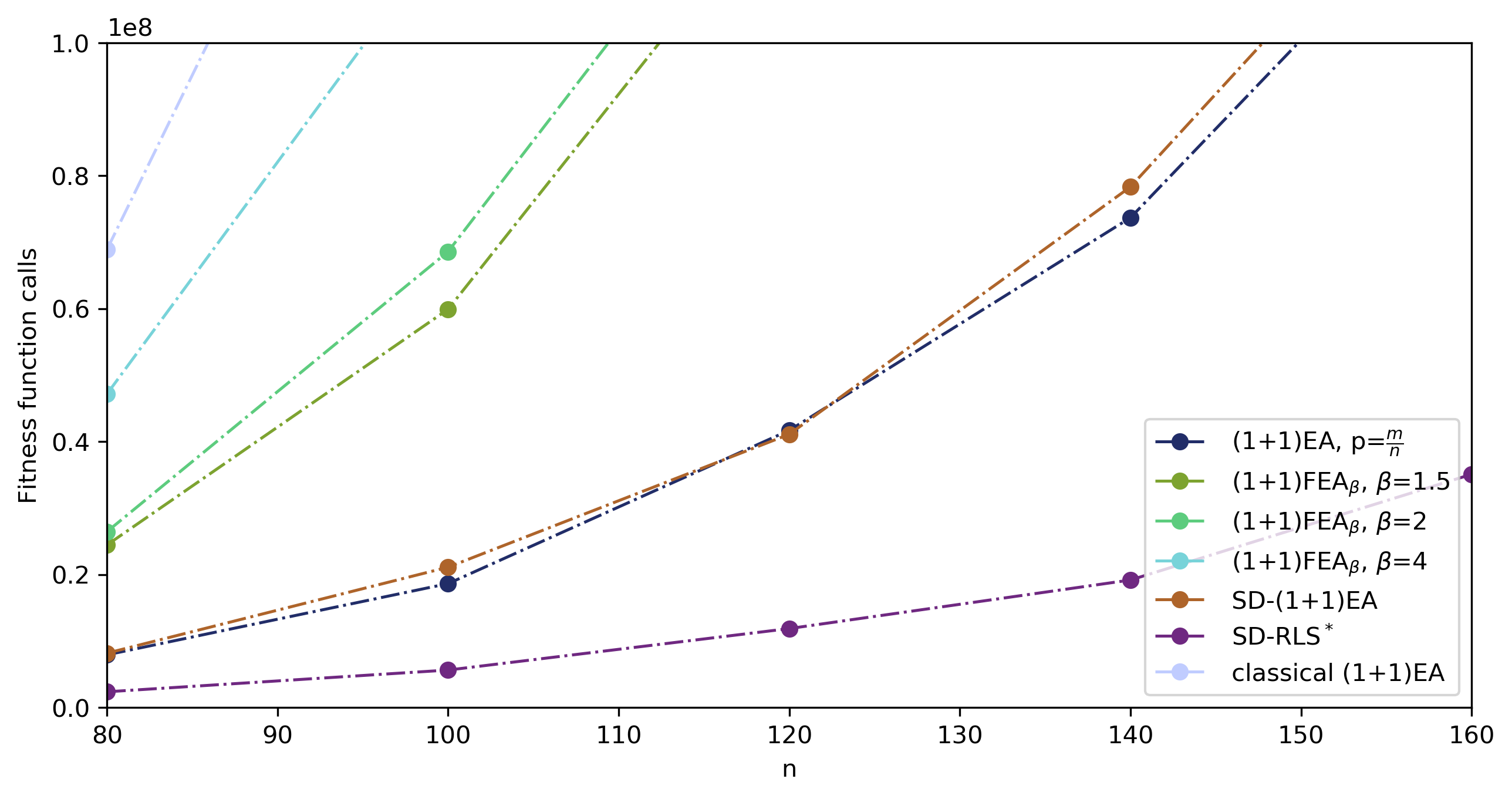}
	\caption{Average number of fitness calls (over 1000 runs) the mentioned algorithms took to optimize $\jump_4$.}\label{fig:exp_jump_line}
\end{figure}

	\begin{figure} 
		\includegraphics[width=\linewidth]{./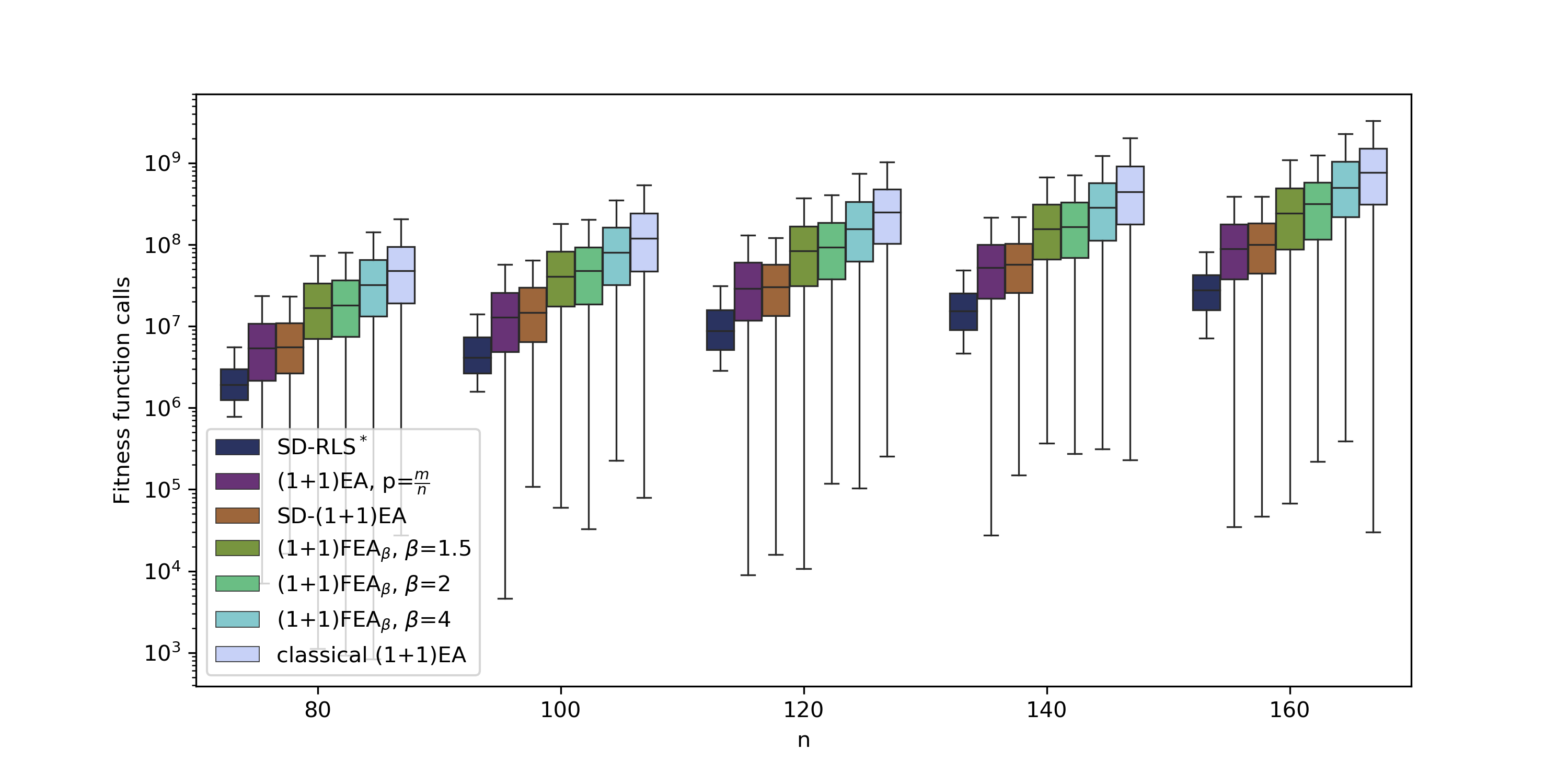}
		\caption{Box plots comparing number of fitness calls (over 1000 runs) the mentioned algorithms took to optimize $\jump_4$.} \label{fig:exp_jump_boxplot}
	\end{figure}
	
In the first experiment, we ran an implementation of Algorithm~\ref{alg:sdrls_star} (\sdrlss) on the \jump fitness function with jump size $m=4$ and $n$ varying from 80 to 160. We compared our algorithm against the \ooea with standard mutation rate $1/n$, the \ooea with mutation probability $m/n$, Algorithm \oofea from \cite{DoerrLMNGECCO17} with three different $\beta=\{1.5, 2, 4\}$, and the~\sdooea presented in \cite{RajabiWittGECCO20}. In Figure~\ref{fig:exp_jump_line}, we observe that \sdrlss outperforms the rest of the algorithms.

In the second experiment, we ran an implementation of four algorithms \sdrlss, \oofea with~$\beta=1.5$ from \cite{DoerrLMNGECCO17}, the standard \oneoneea and RLS$^{1,2}$ from \cite{NeumannW07} on the MST problem with the 
fitness function from \cite{NeumannW07} for two types of graphs called \TG and \erdosrenyi.

\begin{figure}
		\includegraphics[width=\linewidth]{./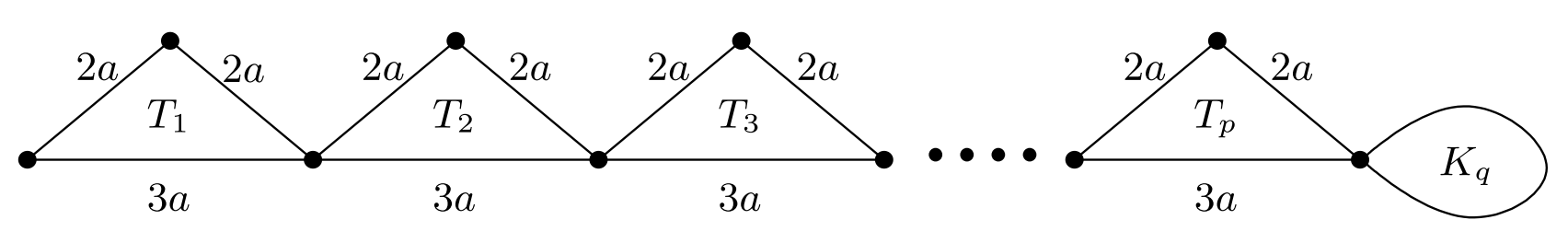}
		\caption{Example graph \TG with $p=n/4$ connected triangles and a complete graph on $q$ vertices with edges of weight 1. The source of the image is from \cite{NeumannW07}.} \label{fig:TG}
	\end{figure}

The graph \TG with $n$ vertices and $m=3n/4+\binom{n/2}{2}$ edges contains a sequence of $p=n/4$ triangles which are connected to each other, and the last triangle is connected to a complete graph of size $q=n/2$. Regarding the weights, the edges of the complete graph have the weight 1, and we set the weights of edges in triangle to $2a$ and $3a$ for the side edges and the main edge, respectively. In this paper, we consider $a=n^2$. The graph \TG is used for estimating lower bounds on the expected runtime of the \ooea and RLS in the literature \cite{NeumannW07}. In this experiment, we use $n=\{24,36,48,60\}$. As can be seen in Figure~\ref{fig:TGresults}, \oofea is faster than the rest of the algorithms, but \sdrlss outperforms the standard \oneoneea and RLS$^{1,2}$.

\begin{figure}
    \centering
    \subfloat[\centering Graphs \erdosrenyi]{{\includegraphics[width=5cm]{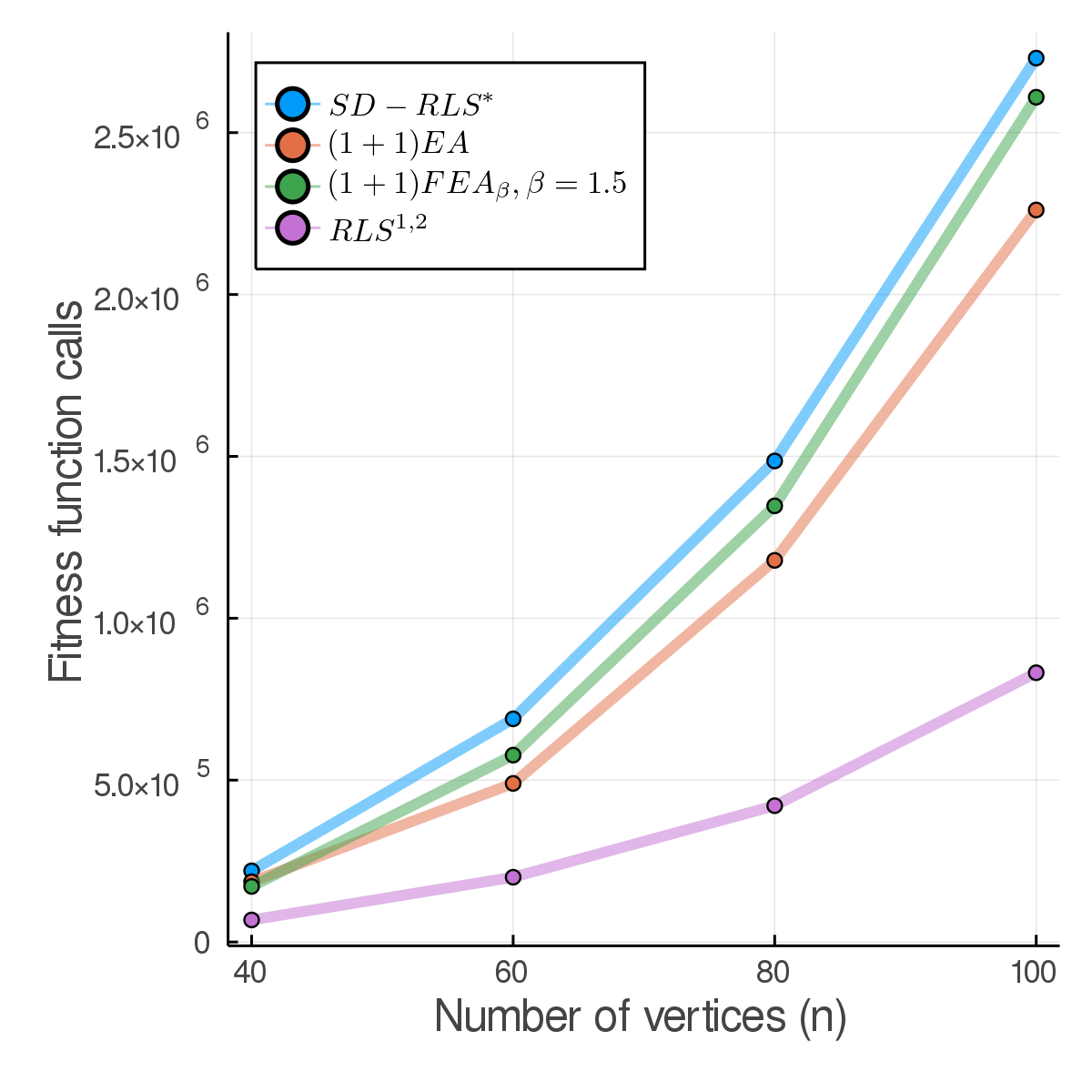} }\label{fig:ERresults}}
    \qquad
    \subfloat[\centering Graphs \TG]{{\includegraphics[width=5cm]{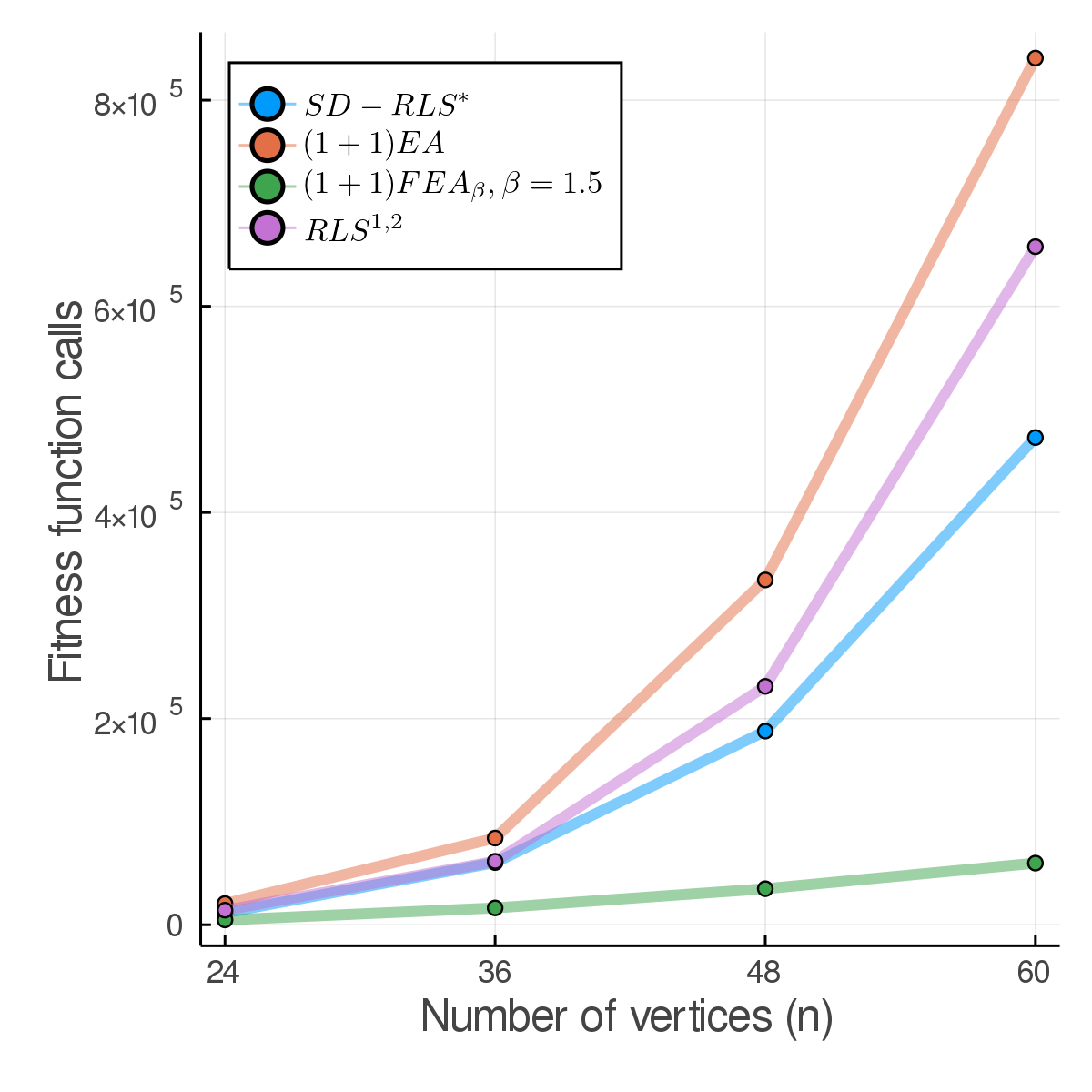} }\label{fig:TGresults}}
    \caption{Average number of fitness calls (over 400 runs) the mentioned algorithms took to optimize the fitness function MST of the graphs.}
    \label{fig:example}
\end{figure}

Regarding the graphs \erdosrenyi, we produced some random \erdosrenyi graphs with $p=(2\ln n)/n$ and assigned each edge an integer weight in the range $[1,n^2]$ uniformly at random. We also checked that the graphs certainly had a spanning tree. Then, we ran the implementation on MST of these graphs. The obtained results can be seen in Figure~\ref{fig:ERresults}. As we discussed in Section~\ref{sec:mst}, \sdrlss does not outperform the \ooea and RLS$^{1,2}$ on MST with graphs where the number of strict improvements in \sdrlss is large.

For statistical tests, we ran the algorithms on the graphs \TG and \erdosrenyi over 400 times, and all p-values obtained from a Mann-Whitney U-test between the algorithms, with respect to the null hypothesis of identical behavior, are less than $10^{-4}$ except for the results regarding the graph \TG with $n=24$.

\section*{Conclusions}
We have transferred stagnation detection, previously proposed for 
EAs with standard bit mutation, to the operator flipping exactly 
$s$ uniformly randomly chosen bits as typically encountered in randomized 
local search. Through both theoretical runtime analyses and experimental studies
we have shown that this combination of stagnation detection and local 
search efficiently leaves local optimal and often outperforms the previously 
considered variants with global mutation. We have also introduced techniques 
that make the algorithm robust if it, due to its randomized nature, misses 
the right number of bits flipped, and analyzed scenarios where global mutations 
 are  still preferable. In the future, we would like to investigate stagnation 
detection more thoroughly on instances of classical 
combinatorial optimization problem like the minimum spanning tree problem, for
which the present paper only gives preliminary but promising results.

\section*{Acknowledgement}
	This work was supported  by a grant by the Danish Council for Independent Research  (DFF-FNU  8021-00260B).

\bibliographystyle{mynatbib_english}

\bibliography{references}

\end{document}